\newcommand{\bx}{\boldsymbol{x}}
\newcommand{\bu}{\boldsymbol{u}}
\newcommand{\h}{h}
\newcommand{\bz}{\boldsymbol{z}}
\newcommand{\bzt}{k_t}
\newcommand{\bzi}{k_i}
\newcommand{\bbarw}{\bar{f}}
\newcommand{\bw}{\boldsymbol{w}}
\newcommand{\f}{f}
\newcommand{\bv}{\boldsymbol{v}}
\DeclareMathOperator*{\argmin}{arg\,min}
\newcommand{\field}[1]{\mathbb{#1}}
\newcommand{\set}[1]{\mathcal{#1}}
\newcommand{\fY}{\set{Y}}
\newcommand{\fX}{\set{X}}
\newcommand{\R}{\field{R}}
\newcommand{\Nat}{\field{N}}
\newcommand{\E}{\field{E}}
\newcommand{\btheta}{g}
\newcommand\innK[2]{ \left\langle {#1} \,,\, {#2} \right\rangle_K }
\newcommand{\norm}[1]{\left\|{#1}\right\|}
\newcommand{\scO}{\mathcal{O}}
\newtheorem{lemma}{Lemma}
\newtheorem{theorem}{Theorem}
\newtheorem{cor}{Corollary}
\newcommand{\sign}{{\rm sign}}
\newcommand{\Risk}{\mathcal{R}}
\newcommand{\RiskLoss}{\mathcal{E}^{\ell}}
\newcommand{\Ltworho}{\mathcal{L}^2_{\rho_\fX}}
\newcommand{\Lonerho}{\mathcal{L}^1_{\rho_\fX}}
\newcommand{\frho}{f_\rho}
\newcommand{\flrho}{f^{\ell}_\rho}
\newcommand{\fcla}{f_c}
\newcommand{\HK}{\mathcal{H}_K}
\newcommand{\normK}[1]{\left\|{#1}\right\|_K}
\newcommand{\normL}[1]{\left\|{#1}\right\|_{\Ltworho}}
\newcommand{\normLOne}[1]{\left\|{#1}\right\|_{\Lonerho}}
\newcommand*{\EXTENDED}{}%
\begin{document}

\title{Simultaneous Model Selection and Optimization through Parameter-free Stochastic Learning}
\author{Francesco Orabona\\
Toyota Technological Institute at Chicago\\Chicago, USA\\
{\tt\small francesco@orabona.com}
}

\maketitle

\begin{abstract}
Stochastic gradient descent algorithms for training linear and kernel predictors are gaining more and more importance, thanks to their scalability.
While various methods have been proposed to speed up their convergence, the model selection phase is often ignored. In fact, in theoretical works most of the time assumptions are made, for example, on the prior knowledge of the norm of the optimal solution, while in the practical world validation methods remain the only viable approach.
In this paper, we propose a new kernel-based stochastic gradient descent algorithm that performs model selection while training, with no parameters to tune, nor any form of cross-validation. The algorithm builds on recent advancement in online learning theory for unconstrained settings, to estimate over time the right regularization in a data-dependent way. Optimal rates of convergence are proved under standard smoothness assumptions on the target function, using the range space of the fractional integral operator associated with the kernel.
\end{abstract}

\section{Introduction}
\label{sec-intro}
\ac{SGD} algorithms are gaining more and more importance in the Machine Learning community as efficient and scalable machine learning tools. There are two possible ways to use a \ac{SGD} algorithm: to optimize a batch objective function, e.g.~\cite{Shalev-ShwartzSS07}, or to directly optimize the generalization performance of a learning algorithm, in a stochastic approximation way~\cite{RobbinsM51}. The second use is the one we will consider in this paper. It allows learning over streams of data, coming \ac{IID} from a stochastic source.
Moreover, it has been advocated that \ac{SGD} theoretically yields the best generalization performance in a given amount of time compared to other more sophisticated optimization algorithms~\cite{BottouB2008}.

Yet, both in theory and in practice, the convergence rate of \ac{SGD} for any finite training set critically depends on the step sizes used during training. In fact, often theoretical analysis assumes the use of optimal step sizes, rarely known in reality, and in practical applications wrong step sizes can result in arbitrary bad performance.
While in finite hypothesis spaces simple optimal strategies are known~\cite{BachM13}, in infinite dimensional spaces the only attempts to solve this problem achieve convergence only in the realizable case, e.g.~\cite{SrebroST10}, or assume prior knowledge of intrinsic (and unknown) characteristic of the problem~\cite{SmaleY05,YingZ06,YingP08,Yao10,TarresY13}. The only known practical and theoretical way to achieve optimal rates in infinite \ac{RKHS} is to use some form of cross-validation to select the step size that corresponds to a form of model selection~\cite[Chapter 7.4]{SteinwartC08}. However, cross-validation techniques would result in a slower training procedure partially neglecting the advantage of the stochastic training. A notable exception is the algorithm in~\cite{RosascoTV14}, that keeps the step size constant and uses the number of epochs on the training set as a regularization procedure. Yet, the number of epochs is decided through the use of a validation set~\cite{RosascoTV14}.

Note that the situation is exactly the same in the batch setting where the regularization takes the role of the step size. Even in this case, optimal rates can be achieved only when the regularization is chosen in a problem dependent way~\cite{ChenWYZ04,YaoRC07,SteinwartHS09,mendelson2010}.

On a parallel route, the \ac{OCO} literature studies the possibility to learn in a scenario where the data are not \ac{IID}~\cite{Zinkevich03,Cesa-BianchiL06}. It turns out that this setting is strictly more difficult than the \ac{IID} one and \ac{OCO} algorithms can also be used to solve the corresponding stochastic problems~\cite{Cesa-BianchiCG04}. The literature on \ac{OCO} focuses on the adversarial nature of the problem and on various ways to achieve adaptivity to its unknown characteristics~\cite{AuerCG02,DuchiHS11}.

This paper is in between these two different worlds: \emph{We extend tools from \ac{OCO} to design a novel stochastic parameter-free algorithm able to obtain optimal finite sample convergence bounds in infinite dimensional \ac{RKHS}}. This new algorithm, called \ac{PiSTOL}, has the same complexity as the plain stochastic gradient descent procedure and implicitly achieves the model selection while training, with no parameters to tune nor the need for cross-validation. The core idea is to change the step sizes over time in a data-dependent way. As far as we know, this is the first algorithm of this kind to have provable optimal convergence rates.

The rest of the paper is organized as follows. After introducing some basic notations (Sec.~\ref{sec:def}), we will explain the basic intuition of the proposed method (Sec.~\ref{sec:intuition}). Next, in Sec.~\ref{sec:adv} we will describe the \ac{PiSTOL} algorithm and its regret bounds in the adversarial setting and in Sec.~\ref{sec:sto} we will show its convergence results in the stochastic setting. The detailed discussion of related work is deferred to Sec.~\ref{sec:rel_sto}.
Finally, we show some empirical results and draw the conclusions in Sec.~\ref{sec:conc}. 

\section{Problem Setting and Definitions}
\label{sec:def}

Let $\fX \subset \R^d$ a compact set and $\mathcal{H}_K$ the \ac{RKHS} associated to a Mercer kernel $K:\fX \times \fX \rightarrow \R$ implementing the inner product $\innK{\cdot}{\cdot}$. The inner product is defined so that it satisfies the reproducing property, $\innK{K(\bx,\cdot)}{f(\cdot)} =f(\bx)$.

Performance is measured w.r.t. a loss function $\ell: \R \rightarrow \R_+$. We will consider \emph{$L$-Lipschitz} losses, that is $|\ell(x)-\ell(x')| \leq L |x-x'|, \ \forall x,x' \in \R$, and \emph{$H$-smooth} losses, that is differentiable losses with the first derivative $H$-Lipschitz. Note that a loss can be both Lipschitz and smooth.
A vector $\bx$ is a subgradient of a convex function $\ell$ at $\bv$ if $\ell(\bu) - \ell(\bv) \ge \langle \bu - \bv, \bx \rangle$ for any $\bu$ in the domain of $\ell$. The differential set of $\ell$ at $\bv$, denoted by $\partial \ell(\bv)$, is the set of all the subgradients of $\ell$ at $\bv$.
$\boldsymbol{1}(\Phi)$ will denote the indicator function of a Boolean predicate $\Phi$.

In the \ac{OCO} framework, at each round $t$ the algorithm receives a vector $\bx_t \in \fX$, picks a $\f_t \in \HK$, and pays $\ell_t(\f_t(\bx_t))$, where $\ell_t$ is a loss function.
The aim of the algorithm is to minimize the \emph{regret}, that is the difference between the cumulative loss of the algorithm, $\sum_{t=1}^T \ell_t(\f_t(\bx_t))$, and the cumulative loss of an arbitrary and fixed competitor $\h \in \HK$, $\sum_{t=1}^T \ell_t(\h(\bx_t))$.

For the statistical setting, let $\rho$ a fixed but unknown distribution on $\fX \times \fY$, where $\fY=[-1,1]$.
A training set $\{\bx_t,y_t\}_{t=1}^T$ will consist of samples drawn \ac{IID} from $\rho$.
Denote by $\frho(x):= \int_\fY y d\rho(y|x)$ the \emph{regression function}, where $\rho(\cdot|x)$ is the conditional probability measure at $x$ induced by $\rho$. 
Denote by $\rho_\fX$ the marginal probability measure on $\fX$ and let $\Ltworho$ be the space of square
integrable functions with respect to $\rho_\fX$, whose norm is denoted by $\normL{\f}:=\sqrt{\int_\fX \f^2(x) d \rho_\fX}$. Note that $\frho \in \Ltworho$.
Define the \emph{$\ell$-risk} of $f$, as $\RiskLoss(f):=\int_{\fX \times \fY} \ell(y f(x)) d \rho$. Also, define $\flrho(x):=\argmin_{t \in \R} \int_\fY \ell(y t) d \rho(y|x)$, that gives the \emph{optimal $\ell$-risk}, $\RiskLoss(\flrho)=\inf_{\f \in \Ltworho} \RiskLoss(\f)$.
In the binary classification case, define the \emph{misclassification risk} of $f$ as $\Risk(f):=P(y\neq \sign(f(x)))$. The infimum of the misclassification risk over all measurable $f$ will be called \emph{Bayes risk} and $\fcla:=\sign(\frho)$, called the \emph{Bayes classifier}, is such that $\Risk(\fcla)=\inf_{f \in \Ltworho} \Risk(f)$. 

Let $L_K : \mathcal{L}^2_{\rho_\fX} \rightarrow \mathcal{H}_K$ the integral operator defined by $(L_K f) (x) = \int_\fX K(x,x') f(x') d \rho_\fX (x')$. 
There exists an orthonormal basis $\{\Phi_1, \Phi_2, \cdots\}$ of $\Ltworho$
consisting of eigenfunctions of $L_K$ with corresponding non-negative eigenvalues
$\{\lambda_1, \lambda_2, \cdots\}$ and the set $\{\lambda_i\}$ is finite or $\lambda_k \rightarrow 0$ when $k\rightarrow \infty$~\cite[Theorem~4.7]{CuckerZ07}.
Since $K$ is a Mercer kernel, $L_K$ is compact and positive.
Therefore, the fractional power operator $L^\beta_K$ is well defined for any $\beta \geq 0$. We indicate its range space by
\begin{wrapfigure}[13]{l}{0.35\textwidth}
  \centering
  \includegraphics[width=0.30\textwidth]{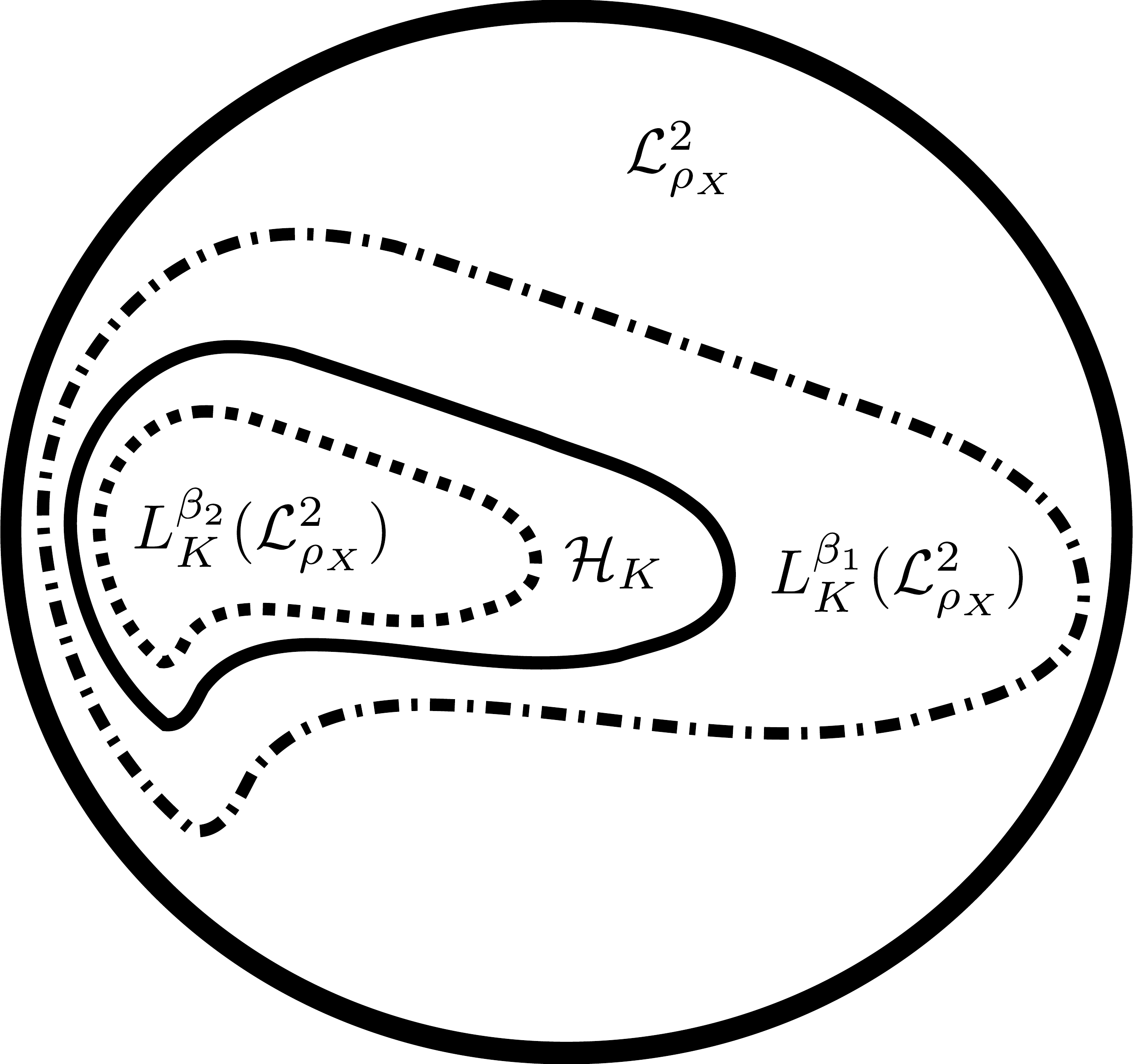}
  \caption{\footnotesize{$\Ltworho$, $\HK$, and $L_K^\beta(\Ltworho)$ spaces, with $0<\beta_1< \frac{1}{2} < \beta_2$.}}
  \label{fig:inclusions}
\end{wrapfigure}
\begin{equation}
\label{eq:range_space}
L_K^\beta(\Ltworho):=\bigg\{ f=\sum_{i=1}^{\infty} a_i \Phi_i \ : \ \sum_{i:a_i \neq 0} a^2_i \lambda_i^{-2\beta} < \infty \bigg\}.
\end{equation}
By the Mercer's theorem, we have that $L^{\frac{1}{2}}_K(\Ltworho)=\HK$, that is every function $f \in \HK$ can be written as $L^\frac{1}{2}_K g$ for some $g \in \Ltworho$, with $\normK{f}=\normL{g}$.
On the other hand, by definition of the orthonormal basis, $L^{0}_K(\Ltworho)=\Ltworho$. Thus, the smaller $\beta$ is, the bigger this space of the functions will be,\footnote{The case that $\beta<1$ implicitly assumes that $\HK$ is infinite dimensional. If $\HK$ has finite dimension, $\beta$ is 0 or 1. See also the discussion in~\cite{SteinwartHS09}.} see Fig.~\ref{fig:inclusions}.
This space has a key role in our analysis. In particular, we will assume that $\flrho \in L^{\beta}_K(\Ltworho)$ for $\beta>0$, that is
\begin{equation}
\label{eq:assumption}
\exists g \in \Ltworho \  : \ \flrho=L^{\beta}_K (g).
\end{equation}

\section{A Gentle Start: ASGD, Optimal Step Sizes, and the Perceptron}
\label{sec:intuition}
We want to investigate the problem of training a predictor, $\bbarw_T$, on the training set $\{\bx_t,y_t\}_{t=1}^T$ in a stochastic way, using each sample only once, to have $\RiskLoss(\bbarw_T)$ converge to $\RiskLoss(\flrho)$.
For the square loss, $\ell(x)=(1-x)^2$, the \ac{ASGD} in Algorithm~\ref{alg:sgd} has been proposed as a fast stochastic algorithm to train predictors~\cite{Zhang04b}. \ac{ASGD} simply goes over all the samples once, updates the predictor with the gradients of the losses, and returns the averaged solution.
For \ac{ASGD} with constant step size $0<\eta\leq\tfrac{1}{4}$, it is immediate to show\footnote{For completeness, the proof is in the Appendix.} that
\begin{equation}
\E[\RiskLoss(\bbarw_T)] 
\leq \inf_{\h \in \HK} \RiskLoss(\h)  + \normK{\h}^2 (\eta T)^{-1} +4\eta. \label{eq:zhang}
\end{equation}
This result shows the link between step size and regularization: In expectation, the $\ell$-risk of the averaged predictor will be close to the $\ell$-risk of the best \emph{regularized} function in $\HK$. Moreover, the amount of regularization depends on the step size used. From \eqref{eq:zhang}, one might be tempted to choose $\eta=\scO(T^{-\frac{1}{2}})$. With this choice, when the number of samples goes to infinity, \ac{ASGD} would converge to the performance of the best predictor in $\HK$ at a rate of $\scO(T^{-\frac{1}{2}})$, \emph{only if} the infimum $\inf_{\h \in \HK} \RiskLoss(\h)$ is attained by a function in $\HK$. Note that even with a \emph{universal kernel} we only have $\RiskLoss(\flrho)=\inf_{\h \in \HK} \RiskLoss(\h)$ but there is no guarantee that the infimum is attained~\cite{SteinwartC08}.

\begin{figure*}[t]
\begin{minipage}[t]{0.48\textwidth}
    \begin{algorithm}[H]
      \begin{algorithmic}
      {
	\STATE{\bfseries Parameters:} $\eta>0$
	\STATE{\bfseries Initialize:} $\f_1=\boldsymbol{0} \in \HK$
	\FOR{$t=1,2,\dots$}
	\STATE{Receive input vector $\bx_t \in \fX$}
        \STATE{Predict with $\hat{y}_t=\f_t(\bx_t)$}
	\STATE{Update $\f_{t+1}=\f_{t} + \eta y_t \ell'(y_t \hat{y}_t) k(\bx_t,\cdot) $}
	\ENDFOR
        \STATE{Return $\bbarw_T=\tfrac{1}{T} \sum_{t=1}^T \f_{t}$}
      }
      \end{algorithmic}
      \caption{Averaged SGD.}
      \label{alg:sgd}
    \end{algorithm}
  \end{minipage}
  \hfill
  \begin{minipage}[t]{0.48\textwidth}
    \begin{algorithm}[H]
      \begin{algorithmic}
      {
	\STATE{\bfseries Parameters:} None
	\STATE{\bfseries Initialize:} $\f_1=\boldsymbol{0} \in \HK$
	\FOR{$t=1,2,\dots$}
	\STATE{Receive input vector $\bx_t \in \fX$}
	\STATE{Predict with $\hat{y}_t=\sign(\f_t(\bx_t))$}
	\STATE{Suffer loss $\boldsymbol{1}(\hat{y}_t\neq y_t)$}
	\STATE{Update $\f_{t+1}=\f_{t} + y_t \boldsymbol{1}(\hat{y}_t\neq y_t) k(\bx_t,\cdot)$}
	\ENDFOR
      }
      \end{algorithmic}
      \caption{The Kernel Perceptron.}
      \label{alg:perc}
    \end{algorithm}
  \end{minipage}
\end{figure*}

On the other hand, there is a vast literature examining the general case when \eqref{eq:assumption} holds~\cite{ChenWYZ04,SmaleY05,YingZ06,YaoRC07,CaponnettoV07,BauerPR2007,YingP08,SteinwartHS09,mendelson2010,Yao10,TarresY13}. Under this assumption, this infimum is attained only when $\beta\geq \frac{1}{2}$, \emph{yet it is possible to prove convergence for $\beta>0$}. In fact, when \eqref{eq:assumption} holds it is known that $\min_{\h \in \HK} \left[\RiskLoss(\h) + \normK{\h}^2 (\eta T)^{-1}\right] - \RiskLoss(\flrho) = \scO((\eta T)^{-2\beta})$~\cite[Proposition 8.5]{CuckerZ07}. Hence, it was observed in \cite{YingP08} that setting $\eta =  \scO(T^{-\frac{2\beta}{2\beta+1}})$ in \eqref{eq:zhang}, we obtain
$
\E[\RiskLoss(\bbarw_T)] - \RiskLoss(\flrho) =\scO\left(T^{-\frac{2\beta}{2\beta+1}}\right)
$,
that is the optimal rate~\cite{YingP08,SteinwartHS09}.
Hence, the setting $\eta=\scO(T^{-\frac{1}{2}})$ is optimal only when $\beta=\frac{1}{2}$, that is $\flrho \in \HK$. In all the other cases, the convergence rate of \ac{ASGD} to the optimal $\ell$-risk is suboptimal. Unfortunately, $\beta$ is typically unknown to the learner.


On the other hand, using the tools to design self-tuning algorithms, e.g.~\cite{AuerCG02,DuchiHS11}, it may be possible to design an \ac{ASGD}-like algorithm, able to self-tune its step size in a data-dependent way.
Indeed, we would like an algorithm able to select the optimal step size in \eqref{eq:zhang}, that is
\begin{equation}
\E[\RiskLoss(\bbarw_T)] \leq \inf_{\h \in \HK} \RiskLoss(\h)  + \min_{\eta>0} \normK{\h}^2 (\eta T)^{-1} +4\eta = \inf_{\h \in \HK} \RiskLoss(\h)  + 4 \normK{\h} T^{-\frac{1}{2}}. \label{eq:min_bound}
\end{equation}
In the \ac{OCO} setting, this would correspond to a regret bound of the form $\scO(\normK{\h} T^{\frac{1}{2}})$. An algorithm that has this kind of guarantee is the Perceptron algorithm~\cite{Rosenblatt58}, see Algorithm~\ref{alg:perc}.
In fact, for the Perceptron it is possible to prove the following mistake bound~\cite{Cesa-BianchiL06}:
\vspace{-0.1cm}
\begin{equation}
\label{eq:perc}
\text{Number of Mistakes} \leq \inf_{\h \in \HK} \sum_{t=1}^T \ell^{h}(y_t \h(\bx_t)) + \normK{\h}^2 + \normK{\h} \sqrt{\sum_{t=1}^T \ell^{h}(y_t \h(\bx_t))}, 
\end{equation}
where $\ell^{h}$ is the hinge loss, $\ell^{h}(x)=\max(1-x,0)$.
The Perceptron algorithm is similar to \ac{SGD} but its behavior is independent of the step size, hence, it can be thought as always using the optimal one. Unfortunately, we are not done yet: While \eqref{eq:perc} has the right form of the bound, it is not a regret bound, rather only a mistake bound, specific for binary classification. In fact, the performance of the competitor $\h$ is measured with a different loss (hinge loss) than the performance of the algorithm (misclassification loss). For this asymmetry, the convergence when $\beta< \frac{1}{2}$ cannot be proved.
Instead, we need an online algorithm whose regret bound scales as $\scO(\normK{\h} T^{\frac{1}{2}})$, returns the averaged solution, and, thanks to the equality in \eqref{eq:min_bound}, obtains a convergence rate which would depend on
\begin{equation}
\label{eq:key_eq}
\min_{\eta>0} \ \normK{\h}^2 (\eta T)^{-1} + \eta.
\end{equation}
The r.h.s of \eqref{eq:key_eq} has exactly the same form of the expression in \eqref{eq:zhang}, but with a minimum over $\eta$. Hence, we can expect it to always have the optimal rate of convergence.
In the next section, we will present such algorithm.
\vspace{-0.1cm}

\section{PiSTOL: Parameter-free STOchastic Learning}
\label{sec:adv}

\begin{algorithm}[t]
\begin{algorithmic}[1]
{
\STATE{\bfseries Parameters:} $a,b,L>0$
\STATE{\bfseries Initialize:} $\btheta_0=\boldsymbol{0} \in \HK$, $\alpha_0=a L$
\FOR{$t=1,2,\dots$}
\STATE{Set $\f_t=\btheta_{t-1} \frac{b}{\alpha_{t-1}} \exp\left( \frac{\normK{\btheta_{t-1}}^2}{2 \alpha_{t-1}}\right)$}
\STATE{Receive input vector $\bx_t \in \fX$}
\STATE{\emph{Adversarial setting:} Suffer loss $\ell_t(\f_t(\bx_t))$}
\STATE{Receive subgradient $s_t \in \partial \ell_t(\f_t(\bx_t))$}
\STATE{Update $\btheta_{t}=\btheta_{t-1} - s_t k(\bx_t,\cdot)$ and $\alpha_t=\alpha_{t-1} + a |s_t| \normK{k(\bx_t,\cdot)}$}
\ENDFOR
\STATE{\emph{Statistical setting:} Return $\bbarw_T=\frac{1}{T} \sum_{t=1}^T \f_t$}
}
\end{algorithmic}
\caption{PiSTOL: Parameter-free STOchastic Learning.}
\label{alg:opt}
\end{algorithm}

In this section we describe the \ac{PiSTOL} algorithm. The pseudo-code is in Algorithm~\ref{alg:opt}. The algorithm builds on recent advancement in unconstrained online learning~\cite{StreeterM12, Orabona13,McMahanO14}. It is very similar to a \ac{SGD} algorithm~\cite{Zhang04b}, the main difference being the computation of the solution based on the past gradients, in line 4. Note that the calculation of $\normK{\btheta_t}^2$ can be done incrementally, hence, the computational complexity is the same as \ac{ASGD} in a \ac{RKHS}, Algorithm~\ref{alg:sgd}.
For the \ac{PiSTOL} algorithm we have the following regret bound.\footnote{All the proofs are in Appendix.}
\begin{theorem}
\label{theo:main}
Assume that the sequence of $\bx_t$ satisfies $\normK{k(\bx_t,\cdot)}\leq 1$ and the losses $\ell_t$ are convex and $L$-Lipschitz.
Let $a>0$ such that $a \geq 2.25 L$. Then, for any $\h \in \HK$, the following bound on the regret holds for the \ac{PiSTOL} algorithm
\[
\begin{split}
\sum_{t=1}^T \left[\ell_t(\f_t(\bx_t)) - \ell_t(\h(\bx_t))\right] \leq &\normK{\h} \sqrt{ 2 a \left(L+ \sum_{t=1}^{T-1} |s_t| \right) \log \left(\frac{\normK{\h} \sqrt{a L T } }{b}+1\right) } \\
&\quad +b \phi\left(a^{-1} L\right) \log \left(1+T\right),
\end{split}
\]
where
$
\phi(x):=\frac{x}{2} \, \frac{\exp\left(\frac{x}{2}\right) \left(x+1\right) +2}{1-x\exp\left(\frac{x}{2}\right)-x} \, \left(\exp\left(\frac{x}{2}\right) \left(x+1\right) +2\right)
$.
\end{theorem}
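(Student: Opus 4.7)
The plan is to follow the potential-function (``coin-betting'') recipe for parameter-free online learning in Hilbert spaces: identify the potential whose gradient generates $\f_t$, establish a one-step ``potential decrease'' inequality, telescope, and then turn the resulting reward lower bound into a regret bound via Fenchel duality. First I would linearize: by convexity of $\ell_t$ and the choice $s_t\in\partial\ell_t(\f_t(\bx_t))$, we have $\ell_t(\f_t(\bx_t))-\ell_t(\h(\bx_t)) \le s_t(\f_t(\bx_t)-\h(\bx_t))$, and since the reproducing property together with the update give $\btheta_T=-\sum_{t=1}^T s_t\,k(\bx_t,\cdot)$, the cumulative competitor term becomes $\sum_t s_t\h(\bx_t) = -\innK{\h}{\btheta_T}$. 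Hence it suffices to bound $\sum_{t=1}^T s_t\f_t(\bx_t)+\innK{\h}{\btheta_T}$ from above.

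Next I would introduce the potential $F_t(\btheta):=b\exp(\normK{\btheta}^2/(2\alpha_t))$ and note that line~4 of Algorithm~\ref{alg:opt} is precisely $\f_t=\nabla_\btheta F_{t-1}(\btheta_{t-1})$. The heart of the argument is a per-round estimate of the form
\[
F_{t-1}(\btheta_{t-1}) - s_t \f_t(\bx_t) \;\ge\; F_t(\btheta_t) - \varepsilon_t ,
\]
with a nonnegative slack $\varepsilon_t$ that will telescope into the logarithmic correction. Expanding $\normK{\btheta_t}^2=\normK{\btheta_{t-1}}^2-2s_t\innK{\btheta_{t-1}}{k(\bx_t,\cdot)}+s_t^2\normK{k(\bx_t,\cdot)}^2$ and using $\alpha_t=\alpha_{t-1}+a|s_t|\normK{k(\bx_t,\cdot)}$, this reduces to a one-dimensional comparison between two exponentials. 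This will be the hard part: the assumption $a\ge 2.25\,L$ is exactly what is needed to keep the relative increment $|s_t|\normK{k(\bx_t,\cdot)}/\alpha_{t-1}$ bounded away from the pole of $\phi$, so that a second-order Taylor expansion of $e^x$ around $\normK{\btheta_{t-1}}^2/(2\alpha_{t-1})$ absorbs the cross term $-2s_t\innK{\btheta_{t-1}}{k(\bx_t,\cdot)}$ against $F_t$, leaving a residual of the form $\varepsilon_t \asymp b\phi(L/a)\cdot(\alpha_t-\alpha_{t-1})/\alpha_t$; the explicit $\phi$ will emerge from optimizing the Taylor constants, and its denominator $1-x\exp(x/2)-x$ vanishes precisely at the threshold $x=L/a\approx 1/2.25$.

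Finally, telescoping the per-round inequality across $t=1,\dots,T$ with $F_0(\bzero)=b$ and the standard estimate $\sum_t(\alpha_t-\alpha_{t-1})/\alpha_t \le \log(\alpha_T/\alpha_0)\le \log(1+T)$ yields $\sum_t s_t\f_t(\bx_t)\le b-F_T(\btheta_T)+b\phi(L/a)\log(1+T)$. Plugging this into the linearized regret from Step~1 and taking the supremum over the (adversarially determined) $\btheta_T$ converts the bound into a Fenchel conjugate,
\[
\textstyle \sum_{t=1}^T[\ell_t(\f_t(\bx_t))-\ell_t(\h(\bx_t))] \;\le\; b + F_T^*(\h) + b\phi(L/a)\log(1+T).
\]
The last calculation is a one-variable optimization: by rotational symmetry $F_T^*(\h)=\max_{r\ge 0}\normK{\h}r-b\exp(r^2/(2\alpha_T))$, whose optimizer satisfies a Lambert-W-type equation and is bounded above by $\normK{\h}\sqrt{2\alpha_T\log(\normK{\h}\sqrt{\alpha_T}/b+1)}$. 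Using $\alpha_T\le a(L+\sum_{t=1}^{T-1}|s_t|)$ inside the square root (the $t=T$ gradient does not affect $\f_T$) and the crude $\alpha_T\le aL(T+1)$ only inside the logarithm gives exactly the stated bound. The main obstacle is the potential-decrease lemma of Step~2; everything else is bookkeeping.
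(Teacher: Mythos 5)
Your proposal follows essentially the same route as the paper's proof: the same time-varying potential $b\exp(\normK{\btheta}^2/(2\alpha_t))$ with $\f_t$ as its gradient, the same per-round potential-decrease inequality established by a second-order Taylor expansion (which is where the constraint $a\geq 2.25L$ and the function $\phi$ arise, exactly as you predict), the same telescoping of the residuals into a $\log(1+T)$ term via $\sum_t(\alpha_t-\alpha_{t-1})/\alpha_t\leq\log(\alpha_T/\alpha_0)$, and the same final Fenchel-conjugate bound $\Psi_T^*(\h)\leq\normK{\h}\sqrt{2\alpha_T\log(\sqrt{\alpha_T}\normK{\h}/b+1)}$. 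The plan is sound and matches the paper's argument step for step, with the acknowledged hard part (the two-case Taylor analysis of the exponential increment) being precisely where the paper spends its effort.
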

This theorem shows that \ac{PiSTOL} has the right dependency on $\normK{\h}$ and $T$ that was outlined in Sec.~\ref{sec:intuition} and its regret bound is also optimal up to $\sqrt{\log \log T}$ terms~\cite{Orabona13}.
Moreover, Theorem~\ref{theo:main} improves on the results in \cite{Orabona13, McMahanO14}, obtaining an almost optimal regret that depends on the sum of the absolute values of the gradients, rather than on the time $T$. This is critical to obtain a tighter bound when the losses are $H$-smooth, as shown in the next Corollary.
\begin{cor}
\label{cor:regret_smooth}
Under the same assumptions of Theorem~\ref{theo:main}, if the losses $\ell_t$ are also $H$-smooth, then\footnote{For brevity, the $\tilde{\scO}$ notation hides polylogarithmic terms.}
\[
\sum_{t=1}^T \left[\ell_t(\f_t(\bx_t)) - \ell_t(\h(\bx_t))\right] = \tilde{\scO}\left(\max\left\{\normK{\h}^\frac{4}{3} T^\frac{1}{3}, \normK{\h} T^\frac{1}{4}\left(\sum_{t=1}^T  \ell_t(\h(\bx_t)) + 1\right)^\frac{1}{4}\right\}\right).
\]
\end{cor}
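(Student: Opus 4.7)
The plan is to exploit the self-bounding property of non-negative smooth convex losses to convert the gradient-sum inside the Theorem~\ref{theo:main} bound into a bound involving the algorithm's own cumulative loss, then untangle the resulting implicit inequality.

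First I would invoke the standard fact that if $\ell:\R\to\R_+$ is convex, non-negative and $H$-smooth, then for every $x\in\R$ and every $s\in\partial \ell(x)$ we have $|s|^2 \le 2H\,\ell(x)$. Applied to $\ell_t$ at $f_t(\bx_t)$, this yields $|s_t|\le \sqrt{2H\,\ell_t(f_t(\bx_t))}$. Summing and applying Cauchy--Schwarz gives
\[
\sum_{t=1}^{T-1} |s_t| \ \le\ \sqrt{2HT\sum_{t=1}^{T}\ell_t(f_t(\bx_t))}.
\]
Plugging this into the bound of Theorem~\ref{theo:main} and absorbing the $b\phi(L/a)\log(1+T)$ and the $\sqrt{\log(\cdots)}$ into the $\tilde{\scO}$ hides all polylogarithmic factors, producing
\[
\sum_{t=1}^T \ell_t(f_t(\bx_t)) - \sum_{t=1}^T \ell_t(h(\bx_t))
\ =\ \tilde{\scO}\!\left(\normK{\h}\,T^{1/4}\Big(\sum_{t=1}^T \ell_t(f_t(\bx_t))\Big)^{1/4}\right).
\]

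Next I would convert this into a bound on the regret alone. Writing $A:=\sum_t \ell_t(f_t(\bx_t))$ and $B:=\sum_t \ell_t(h(\bx_t))$, the previous display reads $A-B \le C\,\normK{\h}\,T^{1/4} A^{1/4}$ for some polylogarithmic factor $C$. Now I split into two cases. If $A\le 2(B+1)$, bound $A^{1/4}\le (2(B+1))^{1/4}$ directly to obtain the second term of the $\max$. Otherwise $A-B\ge A/2$, so substituting $A\le 2(A-B)$ into the right-hand side gives $A-B \le C'\normK{\h}\,T^{1/4}(A-B)^{1/4}$, i.e.\ $(A-B)^{3/4}\le C'\normK{\h}\,T^{1/4}$, producing the first term $\normK{\h}^{4/3} T^{1/3}$ of the $\max$. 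Taking the maximum of the two cases yields exactly the claimed bound.

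The only real obstacle is the circular dependence: the Theorem~\ref{theo:main} regret bound already contains $\sum_t|s_t|$ \emph{and} this same quantity is controlled only by the algorithm's loss, which in turn appears on the left side as part of the regret. Unwinding the implicit inequality in $A$ is what forces the two-case analysis above, and it is precisely for this step that the tighter form of Theorem~\ref{theo:main} (depending on $\sum_t|s_t|$ rather than on $T$) is essential --- a $\sqrt{T}$ regret would not telescope this way. Everything else is routine: the self-bounding inequality, Cauchy--Schwarz, and algebraic case analysis.
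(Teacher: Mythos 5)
Your proof is correct and follows essentially the same route as the paper's: the self-bounding property of non-negative smooth losses ($|s_t|^2\le 4H\,\ell_t(\f_t(\bx_t))$, Lemma~\ref{lemma:smooth_loss}), Cauchy--Schwarz to turn $\sum_t|s_t|$ into $\sqrt{T\sum_t\ell_t(\f_t(\bx_t))}$, and then resolving the resulting implicit inequality in the algorithm's cumulative loss. The only immaterial difference is the last step: the paper solves $x\le a(x+b)^{1/4}+c$ via Lemma~\ref{lemma:solve_alpha} (a consequence of a lemma on positive roots of polynomial-type equations), whereas you use an equivalent elementary two-case argument, which yields the same two branches of the $\max$.
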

This bound shows that, if the cumulative loss of the competitor is small, the regret can grow slower than $\sqrt{T}$.
It is worse than the regret bounds for smooth losses in \cite{Cesa-BianchiL06,SrebroST10} because when the cumulative loss of the competitor is equal to 0, the regret still grows as $\tilde{\scO}\left(\normK{\f}^\frac{4}{3} T^\frac{1}{3}\right)$ instead of being constant. However, the \ac{PiSTOL} algorithm does not require the prior knowledge of the norm of the competitor function $\h$, as all the ones in \cite{Cesa-BianchiL06,SrebroST10} do.

In the Appendix, we also show a variant of \ac{PiSTOL} for linear kernels with almost optimal learning rate for each coordinate. Contrary to other similar algorithms, e.g. \cite{DuchiHS11}, it is a truly parameter-free one.

\section{Convergence Results for PiSTOL}
\label{sec:sto}

In this section we will use the online-to-batch conversion to study the $\ell$-risk and the misclassification risk of the averaged solution of \ac{PiSTOL}.
We will also use the following definition: $\rho$ has \emph{Tsybakov noise exponent} $q\geq0$ \cite{Tsybakov04} iff there exist $c_q>0$ such that
\begin{equation}
\label{eq:tsy_cond1}
P_X (\{ x \in \fX: -s \leq \frho(x) \leq s \}) \leq c_q s^q, \quad \forall s \in [0,1].
\end{equation}
Setting $\alpha=\frac{q}{q+1} \in [0,1]$, and $c_\alpha=c_q+1$, condition \eqref{eq:tsy_cond1} is equivalent~\cite[Lemma 6.1]{YaoRC07} to:
\begin{equation}
\label{eq:tsy_cond2}
P_X(\sign(f(x)) \neq \fcla(x)) \leq c_\alpha (R(f)-R(\frho))^\alpha, \quad \forall f \in \Ltworho.
\end{equation}
These conditions allow for faster rates in relating the expected excess misclassification risk to the expected $\ell$-risk, as detailed in the following Lemma that is a special case of \cite[Theorem 10]{BartlettJM2006}.
\begin{lemma}
\label{lemma:comparison_risks}
Let $\ell:\R \rightarrow \R_+$ be a convex loss function, twice differentiable at $0$, with $\ell'(0)< 0$, $\ell''(0)> 0$, and with the smallest zero in 1.
Assume condition \eqref{eq:tsy_cond2} is verified. Then for the averaged solution $\bbarw_T$ returned by \ac{PiSTOL} it holds
\[
\E[\Risk(\bbarw_T)] - \Risk(\fcla) \leq \left(32 \frac{c_\alpha}{C} \left(\E[\RiskLoss(\bbarw_T)]-\RiskLoss(\flrho)\right)\right)^\frac{1}{2-\alpha}, \ \text{$C=\min\left\{-\ell'(0), \frac{(\ell'(0))^2}{\ell''(0)}\right\}$.}
\]
\end{lemma}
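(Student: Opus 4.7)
The plan is to specialize the general comparison principle between excess misclassification risk and excess $\ell$-risk of Bartlett--Jordan--McAuliffe to the present smoothness assumptions on $\ell$, then refine the resulting pointwise bound using Tsybakov's noise condition~\eqref{eq:tsy_cond2}, and finally pass to expectation via Jensen's inequality.

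\textbf{Step 1: a quadratic lower bound on the $\psi$-transform.} Writing the conditional $\ell$-risk as $C_\ell(\eta,\alpha)=\eta\ell(\alpha)+(1-\eta)\ell(-\alpha)$ with $\eta(x)=(1+\frho(x))/2$, and $C^*_\ell(\eta)=\inf_\alpha C_\ell(\eta,\alpha)$, the BJM psi-transform is
\[
\psi(\theta) \;=\; C_\ell\!\left(\tfrac{1+\theta}{2},0\right) - C^*_\ell\!\left(\tfrac{1+\theta}{2}\right),
\]
the pointwise excess conditional risk when one is forced to predict with the wrong sign at a point with Bayes margin $\theta$. Under the assumptions that $\ell$ is convex with $\ell'(0)<0$ and $\ell''(0)>0$, a second-order Taylor expansion of $C_\ell(\eta,\cdot)$ at $0$, together with a case split on whether the unconstrained minimizer $\alpha^*(\eta)$ is of order $\theta$ (small-$\theta$ regime, controlled by $(\ell'(0))^2/\ell''(0)$) or saturates at a positive constant (larger-$\theta$ regime, controlled by $-\ell'(0)$), yields a uniform quadratic lower bound $\psi(\theta)\ge \tfrac{C}{16}\,\theta^2$ for $\theta\in[0,1]$, with $C=\min\{-\ell'(0),(\ell'(0))^2/\ell''(0)\}$. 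The hypothesis that the smallest zero of $\ell$ is at $1$ ensures $\psi$ is classification-calibrated and rules out degenerate cases.

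\textbf{Step 2: pointwise comparison refined by Tsybakov.} The standard BJM pointwise inequality is
\[
\psi(|2\eta(x)-1|)\,\boldsymbol{1}(\sign f(x)\neq \fcla(x)) \;\le\; C_\ell(\eta(x),f(x)) - C^*_\ell(\eta(x)).
\]
Integrating against $\rho_\fX$, the right-hand side becomes $\RiskLoss(f)-\RiskLoss(\flrho)$. On the left, apply Jensen's inequality to the convex function $\psi$ on the probability space conditioned on $\{\sign f \neq \fcla\}$, then use \eqref{eq:tsy_cond2} to bound $\rho_\fX(\sign f \neq \fcla)\le c_\alpha(\Risk(f)-\Risk(\fcla))^\alpha$, together with the identity $\E[|2\eta(x)-1|\,\boldsymbol{1}(\sign f\neq \fcla)]=\Risk(f)-\Risk(\fcla)$. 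Substituting the quadratic lower bound of Step~1 and rearranging yields, after tracking constants,
\[
\bigl(\Risk(f)-\Risk(\fcla)\bigr)^{2-\alpha} \;\le\; \frac{32\,c_\alpha}{C}\,\bigl(\RiskLoss(f)-\RiskLoss(\flrho)\bigr).
\]

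\textbf{Step 3: passing to expectation.} Apply the previous inequality with $f=\bbarw_T$ and take expectations. Since $\alpha\in[0,1]$, the exponent $\tfrac{1}{2-\alpha}\in[\tfrac12,1]$, so the map $u\mapsto u^{1/(2-\alpha)}$ is concave on $[0,\infty)$; Jensen's inequality moves the expectation inside and produces the claimed bound.

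The main obstacle is Step~1: extracting the specific constant $C=\min\{-\ell'(0),(\ell'(0))^2/\ell''(0)\}$ requires a careful split on the location of the conditional-risk minimizer $\alpha^*(\eta)$ and a unification of the two regimes into one uniform quadratic lower bound, rather than the generic modulus of convexity provided by BJM. Everything after this reduces to a direct invocation of the BJM machinery plus Jensen's inequality, so the real work lies in the calibration of the constant $C$.
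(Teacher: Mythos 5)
Your proposal is correct in outline and lands within the stated constant ($16 c_\alpha/C \le 32 c_\alpha/C$), but it packages the argument differently from the paper. The paper does not go through the BJM $\psi$-transform explicitly: it writes $\Risk(f)-\Risk(\fcla)=\int_{X_f}|\frho|\,d\rho_\fX$ over the disagreement set $X_f$, splits this integral at a threshold $\epsilon$ into the parts where $|\frho|>\epsilon$ and $|\frho|\le\epsilon$, bounds the first by $\epsilon^{-1}\int_{X_f}|\frho|^2$ and the second by $\epsilon\, c_\alpha(\Risk(f)-\Risk(\fcla))^\alpha$ via \eqref{eq:tsy_cond2}, and then imports the inequality $\int_{X_f}|\frho|^2\,d\rho_\fX\le \tfrac{1}{C}(\RiskLoss(f)-\RiskLoss(\flrho))$ wholesale from Lemma~10.10 and Theorem~10.5 of Cucker--Zhou; optimizing over $\epsilon$ and applying Jensen gives the result with constant $4$ in place of $32$. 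Your route replaces the $\epsilon$-split-and-optimize by conditioning on $X_f$ and applying Jensen to the convex $\psi$, which is mathematically the same Cauchy--Schwarz-type relation among $\int_{X_f}|\frho|$, $\int_{X_f}|\frho|^2$, and $\rho_\fX(X_f)$, and your Step~1 re-derives from scratch exactly the calibration fact that the paper outsources to Cucker--Zhou (the quadratic lower bound on the pointwise excess conditional risk with constant governed by $\min\{-\ell'(0),(\ell'(0))^2/\ell''(0)\}$). The honest assessment is that your version is self-contained and closer in spirit to the citation given in the lemma statement (BJM Theorem~10), while the paper's version is shorter because it leans on the textbook lemma; the only place where your sketch leaves real work undone is the uniform constant $C/16$ in Step~1, which you correctly flag, and which requires the same small/large-margin case split that Cucker--Zhou carry out.
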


The results in Sec.~\ref{sec:adv} give regret bounds over arbitrary sequences. We now assume to have a sequence of training samples $(\bx_t, y_t)_{t=1}^T$ \ac{IID} from $\rho$. We want to train a predictor from this data, that minimizes the $\ell$-risk.
To obtain such predictor we employ a so-called online-to-batch conversion~\cite{Cesa-BianchiCG04}. For a convex loss  $\ell$, we just need to run an online algorithm over the sequence of data $(\bx_t, y_t)_{t=1}^T$, using the losses $\ell_t(x)=\ell(y_t x), \ \forall t=1,\cdots,T$.
The online algorithm will generate a sequence of solutions $\f_t$ and the online-to-batch conversion can be obtained with a simple averaging of all the solutions, $\bbarw_T = \frac{1}{T} \sum_{t=1}^T \f_t$, as for \ac{ASGD}. The average regret bound of the online algorithm becomes a convergence guarantee for the averaged solution~\cite{Cesa-BianchiCG04}.
Hence, for the averaged solution of \ac{PiSTOL}, we have the following Corollary that is immediate from Corollary~\ref{cor:regret_smooth} and the results in~\cite{Cesa-BianchiCG04}.
\begin{cor}
\label{cor:first_risk_bound}
Assume that the samples $(\bx_t, y_t)_{t=1}^T$ are \ac{IID} from $\rho$, and $\ell_t(x)=\ell(y_t x)$.
Then, under the assumptions of Corollary~\ref{cor:regret_smooth}, the averaged solution of \ac{PiSTOL} satisfies
\[
\E[\RiskLoss(\bbarw_T)] \leq \inf_{\h \in \HK}  \RiskLoss(\h) + \tilde{\scO}\left(\max\left\{\normK{\h}^\frac{4}{3} T^{-\frac{2}{3}}, \normK{\h} T^{-\frac{3}{4}}\left(T  \RiskLoss(\h) + 1\right)^\frac{1}{4}\right\}\right).
\]
\end{cor}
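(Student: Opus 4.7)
The plan is to combine Corollary~\ref{cor:regret_smooth} with the standard online-to-batch conversion of~\cite{Cesa-BianchiCG04}, which is a three-step argument: take expectations of the regret inequality, pass the expectation through the average via Jensen (convex side), and then push the expectation inside the concave $1/4$-power term via Jensen (concave side).

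First, I would instantiate the online setting for the statistical problem by letting $\ell_t(x) = \ell(y_t x)$. Because each $\ell$ is $L$-Lipschitz and $H$-smooth, so is each $\ell_t$ (since $|y_t|\le 1$), so Corollary~\ref{cor:regret_smooth} applies pathwise for every realization of $(\bx_t, y_t)_{t=1}^T$. Since $\f_t$ is a measurable function of $(\bx_s, y_s)_{s=1}^{t-1}$ only, conditioning on the past and using the IID assumption gives $\E[\ell_t(\f_t(\bx_t))] = \E[\RiskLoss(\f_t)]$, and for any fixed $\h \in \HK$, $\E[\ell_t(\h(\bx_t))] = \RiskLoss(\h)$. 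Taking expectations in Corollary~\ref{cor:regret_smooth} therefore yields
\[
\sum_{t=1}^T \E[\RiskLoss(\f_t)] - T\,\RiskLoss(\h) \le \tilde{\scO}\!\left(\max\!\left\{\normK{\h}^{4/3} T^{1/3},\ \normK{\h}\, T^{1/4}\,\E\!\left[\sum_{t=1}^T \ell_t(\h(\bx_t)) + 1\right]^{1/4}\right\}\right),
\]
where on the second branch I would use concavity of $x\mapsto x^{1/4}$ together with Jensen's inequality to move the expectation inside the fourth root, obtaining $(T\,\RiskLoss(\h) + 1)^{1/4}$.

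Next, I would exploit that $\bbarw_T = \frac{1}{T}\sum_{t=1}^T \f_t$ and that $\RiskLoss$ is convex (as an expectation of convex losses composed with a linear functional $f\mapsto f(\bx)$). Jensen's inequality gives $\RiskLoss(\bbarw_T) \le \frac{1}{T}\sum_{t=1}^T \RiskLoss(\f_t)$, so $\E[\RiskLoss(\bbarw_T)] \le \frac{1}{T}\sum_{t=1}^T \E[\RiskLoss(\f_t)]$. Dividing the displayed inequality by $T$ then produces
\[
\E[\RiskLoss(\bbarw_T)] - \RiskLoss(\h) \le \tilde{\scO}\!\left(\max\!\left\{\normK{\h}^{4/3} T^{-2/3},\ \normK{\h}\, T^{-3/4}\,(T\,\RiskLoss(\h) + 1)^{1/4}\right\}\right).
\]
Since this bound holds simultaneously for every $\h \in \HK$, taking the infimum over $\h$ on both sides yields exactly the statement of the corollary.

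There is no real obstacle here; the only small point that requires care is the use of Jensen's inequality in the ``right direction'' for each term (convex for the average of $\RiskLoss(\f_t)$, concave for the fourth-root term involving the competitor's cumulative loss). One could sharpen the second term by martingale concentration instead of Jensen, but for the stated bound in expectation the elementary argument above suffices.
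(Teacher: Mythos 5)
Your proposal is correct and is exactly the argument the paper intends: the corollary is stated as immediate from Corollary~\ref{cor:regret_smooth} plus the online-to-batch conversion of~\cite{Cesa-BianchiCG04}, i.e., the conditional-expectation/martingale step, convex Jensen for the average, and concave Jensen for the fourth-root term. The only point you gloss over is that $\E[\max\{A,B\}]$ is not $\max\{A,\E[B]\}$, but bounding $\max\{A,B\}\leq A+B\leq 2\max\{A,B\}$ absorbs this into the $\tilde{\scO}$ constant.
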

Hence, we have a $\tilde{\scO}(T^{-\frac{2}{3}})$ convergence rate to the $\phi$-risk of the best predictor in $\HK$, if the best predictor has $\phi$-risk equal to zero, and $\tilde{\scO}(T^{-\frac{1}{2}})$ otherwise.
Contrary to similar results in literature, e.g. \cite{SrebroST10}, we do not have to restrict the infimum over a ball of fixed radius in $\HK$ and our bounds depends on $\tilde{\scO}(\normK{\h})$ rather than $\scO(\normK{\h}^2)$, e.g. \cite{Zhang04b}.
The advantage of not restricting the competitor in a ball is clear: The performance is always close to the best function in $\HK$, \emph{regardless of its norm}. The logarithmic terms are exactly the price we pay for not knowing in advance the norm of the optimal solution.
For binary classification using Lemma~\ref{lemma:comparison_risks}, we can also prove a $\tilde{\scO}(T^{-\frac{1}{2(2-\alpha)}})$ bound on the excess misclassification risk in the realizable setting, that is if $\flrho \in \HK$.

It would be possible to obtain similar results  with other algorithms, as the one in~\cite{SrebroST10}, using a doubling-trick approach~\cite{Cesa-BianchiL06}. However, this would result most likely in an algorithm not useful in any practical application. Moreover, the doubling-trick itself would not be trivial, for example the one used in~\cite{StreeterM12} achieves a suboptimal regret and requires to start from scratch the learning over two different variables, further reducing its applicability in any real-world application.

As anticipated in Sec.~\ref{sec:intuition}, we now show that the dependency on $\tilde{\scO}(\normK{\h})$ rather than on $\scO(\normK{\h}^2)$ gives us the optimal rates of convergence in the general case that $\flrho \in L_K^\beta(\Ltworho)$, without the need to tune any parameter. This is our main result.
\begin{theorem}
\label{theo:self_tuning}
Assume that the samples $(\bx_t, y_t)_{t=1}^T$ are \ac{IID} from $\rho$, \eqref{eq:assumption} holds for $\beta\leq \frac{1}{2}$, and $\ell_t(x)=\ell(y_t x)$.
Then, under the assumptions of Corollary~\ref{cor:regret_smooth}, the averaged solution of \ac{PiSTOL} satisfies
\begin{itemize}
\item If $\beta\leq\frac{1}{3}$ then
$
\E[\RiskLoss(\bbarw_T)] - \RiskLoss(\flrho) \leq  \tilde{\scO}\left(\max\left\{(\RiskLoss(\flrho)+1/T)^\frac{\beta }{2\beta+1} T^{-\frac{2\beta }{2\beta+1}}, T^{-\frac{2\beta }{\beta+1}} \right\}\right)
$.
\item If $\frac{1}{3}<\beta\leq\frac{1}{2}$, then $\E[\RiskLoss(\bbarw_T)] - \RiskLoss(\flrho)$
\[
\leq \tilde{\scO}\left(\max\left\{(\RiskLoss(\flrho)+1/T)^\frac{\beta }{2\beta+1} T^{-\frac{2\beta }{2\beta+1}},(\RiskLoss(\flrho)+1/T)^\frac{3\beta -1}{4\beta} T^{-\frac{1}{2}} ,T^{-\frac{2\beta }{\beta+1}} \right\}\right).
\]
\end{itemize}
\end{theorem}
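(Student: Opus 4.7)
The plan is to combine Corollary~\ref{cor:first_risk_bound} with a classical consequence of assumption \eqref{eq:assumption}: under \eqref{eq:assumption}, for every $\lambda>0$ there exists a ``regularized'' function $\h_\lambda\in\HK$ satisfying
\[
\RiskLoss(\h_\lambda)-\RiskLoss(\flrho)\leq C\lambda^{2\beta},\qquad \normK{\h_\lambda}\leq \sqrt{C}\,\lambda^{\beta-\frac{1}{2}},
\]
for a constant $C$ depending only on $\beta$ and the $g$ from \eqref{eq:assumption} (this is Proposition~8.5 of Cucker-Zhang already cited in Section~\ref{sec:intuition}; the two inequalities follow because the trade-off $\RiskLoss(\h_\lambda)-\RiskLoss(\flrho)+\lambda\normK{\h_\lambda}^2$ is $\scO(\lambda^{2\beta})$ and both summands are non-negative). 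I would first record this fact.

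Next, I would substitute $\h=\h_\lambda$ into Corollary~\ref{cor:first_risk_bound}, replace $\RiskLoss(\h_\lambda)$ by $\RiskLoss(\flrho)+C\lambda^{2\beta}$ inside the factor $(T\RiskLoss(\h_\lambda)+1)^{1/4}$, and split the resulting sum via $(a+b+c)^{1/4}\leq a^{1/4}+b^{1/4}+c^{1/4}$. This produces an upper bound of the schematic form
\[
\E[\RiskLoss(\bbarw_T)]-\RiskLoss(\flrho)\ \leq\ C\lambda^{2\beta}\ +\ \tilde{\scO}\!\Bigl(\lambda^{\frac{4\beta-2}{3}}T^{-\frac{2}{3}}\ +\ \lambda^{\beta-\frac{1}{2}}T^{-\frac{1}{2}}\bigl(\RiskLoss(\flrho)+1/T\bigr)^{\frac{1}{4}}\ +\ \lambda^{\frac{3\beta-1}{2}}T^{-\frac{1}{2}}\Bigr).
\]
For $\beta\leq\tfrac{1}{2}$ the first summand is increasing in $\lambda$ while the three bracketed summands are decreasing, so optimizing over $\lambda>0$ amounts to balancing $\lambda^{2\beta}$ with each of them in turn.

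This balancing produces three natural candidates. The choice $\lambda^\star\sim T^{-1/(\beta+1)}$ balances $\lambda^{2\beta}$ simultaneously with the first and third decreasing terms and yields the rate $T^{-2\beta/(\beta+1)}$. The choice $\lambda^\star\sim T^{-1/(2\beta+1)}\bigl(\RiskLoss(\flrho)+1/T\bigr)^{1/(4\beta+2)}$ balances $\lambda^{2\beta}$ with the second (``noise'') term and yields $T^{-2\beta/(2\beta+1)}\bigl(\RiskLoss(\flrho)+1/T\bigr)^{\beta/(2\beta+1)}$. Finally, $\lambda^\star\sim\bigl(\RiskLoss(\flrho)+1/T\bigr)^{1/(2\beta)}$ is the value at which $\lambda^{2\beta}$ equals the noise floor $\RiskLoss(\flrho)+1/T$; evaluating the second (or third) decreasing summand at this $\lambda^\star$ yields the ``middle'' rate $T^{-1/2}\bigl(\RiskLoss(\flrho)+1/T\bigr)^{(3\beta-1)/(4\beta)}$. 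Since the bound is valid for every $\lambda$, I would state the overall result as the maximum of these three candidates. The sign of $3\beta-1$ then dictates the final form: for $\beta\leq\tfrac{1}{3}$ the exponent $(3\beta-1)/(4\beta)$ is nonpositive and the middle candidate is dominated by the other two, leaving the two-term max; for $\tfrac{1}{3}<\beta\leq\tfrac{1}{2}$ the middle candidate must be kept explicitly, giving the three-term max stated in the theorem.

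\paragraph{Main obstacle.} The delicate part is the case analysis in the final step: one must verify that at each candidate $\lambda^\star$ all four of the $\lambda$-dependent summands on the right-hand side of the displayed bound are indeed controlled by the corresponding claimed rate, so that taking the maximum of the three candidate rates gives a valid upper bound uniformly across all regimes of $\RiskLoss(\flrho)$ and $T$. The dichotomy at $\beta=\tfrac{1}{3}$ is not a threshold in any natural regime of the noise level but a threshold on the \emph{shape} of the bound, coming from the sign change of $3\beta-1$ that controls whether the third balancing contributes a genuinely new (non-absorbable) rate.
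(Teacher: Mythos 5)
Your route is essentially the paper's own, in a different parametrization. The paper never constructs an explicit approximant: it keeps the $\inf_{\h}$, decouples $\normK{\h}$ from $\RiskLoss(\h)$ via Young's inequality (Lemma~\ref{lemma:prod_ineq}) so that the regularized quantity $\RiskLoss(\h)-\RiskLoss(\flrho)+c\,\eta^{1/2}\normK{\h}^2$ appears, bounds it by $\scO(\eta^{\beta})$ through Lemma~\ref{lemma:smooth_loss_to_l2risk} and Lemma~\ref{lemma:d_lambda} (your two inequalities for $\h_\lambda$ are exactly the content of those two lemmas; note that passing from the $\Ltworho$ approximation of Proposition~8.5 to the $\ell$-risk uses the $H$-smoothness of $\ell$, which the paper isolates in Lemma~\ref{lemma:smooth_loss_to_l2risk}), and then optimizes the auxiliary parameters with Lemma~\ref{lemma:opt_sum}. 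Your single $\lambda$ plays the role of the paper's $\eta,\gamma$, and the $\beta=\tfrac{1}{3}$ dichotomy arises identically: the sign of $\tfrac{1-3\beta}{\beta+1}$ in \eqref{eq:eq_max_2} is the sign of $\tfrac{3\beta-1}{2}$ in your fourth exponent.

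The one step you must tighten is the ``maximum of three candidates'' argument. For $\beta>\tfrac{1}{3}$ the summand $\lambda^{(3\beta-1)/2}T^{-1/2}$ is \emph{increasing} in $\lambda$, so your claim that all three bracketed summands are decreasing fails there; moreover, at your third candidate $\lambda^\star=(\RiskLoss(\flrho)+1/T)^{1/(2\beta)}$ the approximation term $\lambda^{2\beta}$ equals $\RiskLoss(\flrho)+1/T$ itself, which is not controlled by any of the claimed rates, so the candidate-by-candidate verification you flag as the main obstacle genuinely breaks down at that point. The repair is the standard envelope argument: choose the single $\lambda$ where the increasing envelope $\max\{\lambda^{2\beta},\,\lambda^{(3\beta-1)/2}T^{-1/2}\}$ crosses the decreasing envelope of the remaining two summands; the value there equals the balanced rate of one increasing/decreasing pair and is therefore at most the maximum of the pairwise balanced rates. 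The pairs involving $\lambda^{2\beta}$ give $T^{-\frac{2\beta}{\beta+1}}$ and $(\RiskLoss(\flrho)+1/T)^{\frac{\beta}{2\beta+1}}T^{-\frac{2\beta}{2\beta+1}}$, while the pair $\lambda^{(3\beta-1)/2}T^{-1/2}$ versus $\lambda^{\beta-1/2}T^{-1/2}(\RiskLoss(\flrho)+1/T)^{1/4}$ is what produces the middle term for $\beta>\tfrac{1}{3}$. With that correction your proof goes through and recovers exactly the stated rates.
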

\begin{wrapfigure}[16]{l}{0.40\textwidth}
  \centering
  \includegraphics[width=0.40\textwidth]{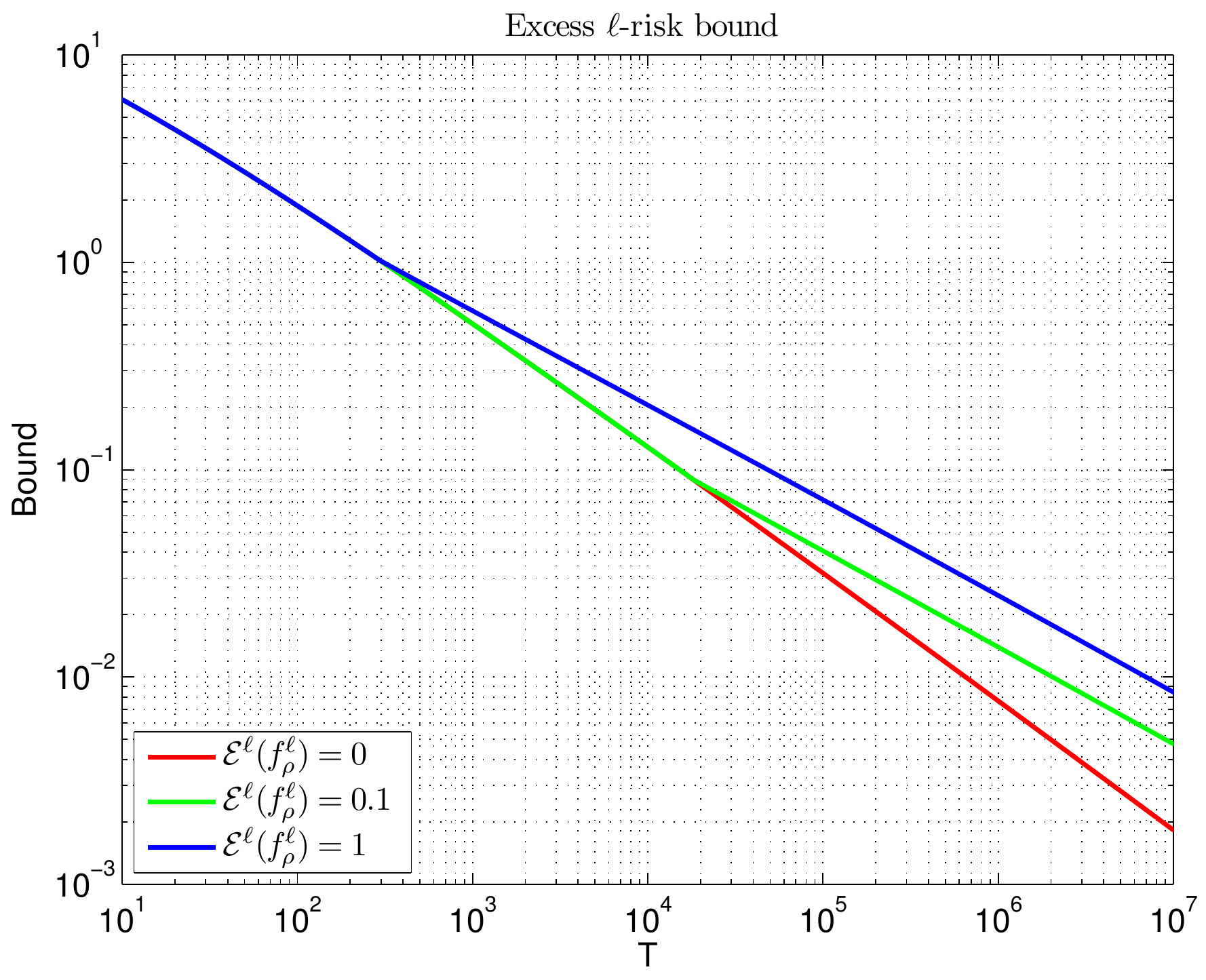}
  \caption{\footnotesize{Upper bound on the excess \newline $\ell$-risk of PiSTOL for $\beta=\frac{1}{2}$.}}
  \label{fig:bound}
\end{wrapfigure}
This theorem guarantees consistency w.r.t. the $\ell$-risk. We have that the rate of convergence to the optimal 
$\ell$-risk is $\tilde{\scO}(T^{-\frac{3\beta }{2\beta+1}})$, if $\RiskLoss(\flrho) = 0$, and $\tilde{\scO}(T^{-\frac{2\beta}{2\beta+1}})$ otherwise.
However, for any finite $T$ the rate of convergence is $\tilde{\scO}(T^{-\frac{2\beta }{\beta+1}})$ for any $T=\scO(\RiskLoss(\flrho)^{-\frac{\beta+1}{2\beta}})$. In other words, we can expect a first regime at faster convergence, that saturates when the number of samples becomes big enough, see Fig.~\ref{fig:bound}. This is particularly important because often in practical applications the features and the kernel are chosen to have good performance that is low optimal $\ell$-risk. Using Lemma~\ref{lemma:comparison_risks}, we have that the excess misclassification risk is $\tilde{\scO}(T^{-\frac{2\beta}{(2\beta+1)(2-\alpha)}})$ if $\RiskLoss(\flrho)\neq 0 $, and $\tilde{\scO}(T^{-\frac{2\beta }{(\beta+1)(2-\alpha)}})$ if $\RiskLoss(\flrho) = 0 $.
It is also worth noting that, being the algorithm designed to work in the adversarial setting, we expect its performance to be robust to small deviations from the \ac{IID} scenario.

Also, note that the guarantees of Corollary~\ref{cor:first_risk_bound} and Theorem~\ref{theo:self_tuning} hold \emph{simultaneously}. Hence, the theoretical performance of \ac{PiSTOL} is always better than both the ones of \ac{SGD} with the step sizes tuned with the knowledge of $\beta$ or with the agnostic choice $\eta=\scO(T^{-\frac{1}{2}})$. In the Appendix, we also show another convergence result assuming a different smoothness condition. 

Regarding the optimality of our results, lower bounds for the square loss are known~\cite{SteinwartHS09} under assumption \eqref{eq:assumption} and further assuming that the eigenvalues of $L_K$ have a polynomial decay, that is
\begin{equation}
\label{eq:eigen}
(\lambda_i)_{i \in \Nat} \sim i^{-b}, \ b\geq 1.
\end{equation}
Condition \eqref{eq:eigen} can be interpreted as an effective dimension of the space.
It always holds for $b=1$~\cite{SteinwartHS09} and this is the condition we consider that is usually denoted as \emph{capacity independent}, see the discussion in~\cite{YingP08,RosascoTV14}.
In the capacity independent setting, the lower bound is $\scO(T^{-\frac{2\beta}{2\beta+1}})$, that matches the asymptotic rates in Theorem~\ref{theo:self_tuning}, up to logarithmic terms.
Even if we require the loss function to be Lipschitz and smooth, it is unlikely that different lower bounds can be proved in our setting.
Note that the lower bounds are worst case w.r.t. $\RiskLoss(\flrho)$, hence they do not cover the case $\RiskLoss(\flrho)=0$, where we get even better rates. Hence, the optimal regret bound of \ac{PiSTOL} in Theorem~\ref{theo:main} translates to an optimal convergence rate for its averaged solution, up to logarithmic terms, establishing a novel link between these two areas.

\section{Related Work}
\label{sec:rel_sto}

The approach of stochastically minimizing the $\ell$-risk of the square loss in a \ac{RKHS} has been pioneered by~\cite{SmaleY05}.
The rates were improved, but still suboptimal, in~\cite{YingZ06}, with a general approach for locally Lipschitz loss functions in the origin.
The optimal bounds, matching the ones we obtain for $\RiskLoss(\flrho)\neq0$, were obtained for $\beta>0$ in expectation by \cite{YingP08}. Their rates also hold for $\beta>\frac{1}{2}$, while our rates, as the ones in \cite{SteinwartHS09}, saturate at $\beta=\frac{1}{2}$.
In \cite{TarresY13}, high probability bounds were proved in the case that $\frac{1}{2}\leq\beta\leq1$. 
Note that, while in the range $\beta\geq\frac{1}{2}$, that implies $\frho \in \HK$, it is possible to prove high probability bounds~\cite{TarresY13,SteinwartHS09,BauerPR2007,CaponnettoV07}, the range $0<\beta<\frac{1}{2}$ considered in this paper is very tricky, see the discussion in~\cite{SteinwartHS09}. In this range no high probability bounds are known without additional assumptions.
All the previous approaches require the knowledge of $\beta$, while our algorithm is parameter-free. Also, we obtain faster rates for the excess $\ell$-risk, when $\RiskLoss(\flrho)=0$.
Another important difference is that we can use any smooth and Lipschitz loss, useful for example to generate sparse solutions, while the optimal results in~\cite{YingP08, TarresY13} are specific for the square loss.

For finite dimensional spaces and self-concordant losses, an optimal parameter-free stochastic algorithm has been proposed in~\cite{BachM13}. However, the convergence result seems specific to finite dimension.

The guarantees obtained from worst-case online algorithms, for example~\cite{SrebroST10}, have typically optimal convergence only w.r.t. the performance of the best in $\HK$, see the discussion in~\cite{YingP08}.
Instead, all the guarantees on the misclassification loss w.r.t. a convex $\ell$-risk of a competitor, e.g. the Perceptron's guarantee, are inherently weaker than the presented ones. To see why, assume that the classifier returned by the algorithm after seeing $T$ samples is $\f_T$, these bounds are of the form of $\Risk(\f_T) \leq \RiskLoss(\h) + \scO(T^{-\frac{1}{2}}(\normK{\h}^2 +1))$. For simplicity, assume the use of the hinge loss so that easy calculations show that $\flrho=\fcla$ and $\RiskLoss(\flrho)=2 \Risk(\fcla)$. Hence, even in the easy case that $\fcla \in \HK$, we have $\Risk(\f_T) \leq 2 \Risk(\fcla) + \scO(T^{-\frac{1}{2}}(\normK{\fcla}^2 +1))$, i.e. no convergence to the Bayes risk.

In the batch setting, the same optimal rates were obtained by \cite{BauerPR2007,CaponnettoV07} for the square loss, in high probability, for $\beta>\frac{1}{2}$. In \cite{SteinwartHS09}, using an additional assumption on the infinity norm of the functions in $\HK$, they give high probability bounds also in the range $0< \beta \leq \frac{1}{2}$. The optimal tuning of the regularization parameter is achieved by cross-validation. Hence, we match the optimal rates of a batch algorithm, without the need to use validation methods.

In Sec.~\ref{sec:intuition} we saw that the core idea to have the optimal rate was to have a classifier whose performance is close to the best regularized solution, where the regularizer is $\normK{\h}$.
Changing the regularization term from the standard $\normK{\h}^2$ to $\normK{\h}^q$ with $q\geq1$ is not new in the batch learning literature. It has been first proposed for classification by \cite{BlanchardBP08}, and for regression by~\cite{mendelson2010}. Note that, in both cases no computational methods to solve the optimization problem were proposed. Moreover, in \cite{SteinwartHS09} it was proved that \emph{all} the regularizers of the form $\normK{\h}^q$ with $q\geq1$ gives optimal convergence rates bound for the square loss, \emph{given an appropriate setting of the regularization weight}. In particular, \cite[Corollary 6]{SteinwartHS09} proves that, using the square loss and under assumptions \eqref{eq:assumption} and \eqref{eq:eigen}, the optimal weight for the regularizer $\normK{\h}^q$ is
$
 T^{-\frac{2\beta+q(1-\beta)}{2\beta + 2/b}}.
$
This implies a very important consequence, not mentioned in that paper: In the the capacity independent setting, that is $b=1$, if we use the regularizer $\normK{\h}$, \emph{the optimal regularization weight is $T^{-\frac{1}{2}}$, independent of the exponent of the range space \eqref{eq:range_space} where $\frho$ belongs}. 
Moreover, in the same paper it was argued that ``From an algorithmic point of view however, q = 2 is currently the only feasible case, which in turn makes SVMs the method of choice''. Indeed, in this paper we give a parameter-free efficient procedure to train predictors with smooth losses, that implicitly uses the $\normK{\h}$ regularizer. Thanks to this, the regularization parameter does not need to be set using prior knowledge of the problem.

\begin{figure}[t]
\begin{center}
\subfigure{\includegraphics[width=0.30\textwidth]{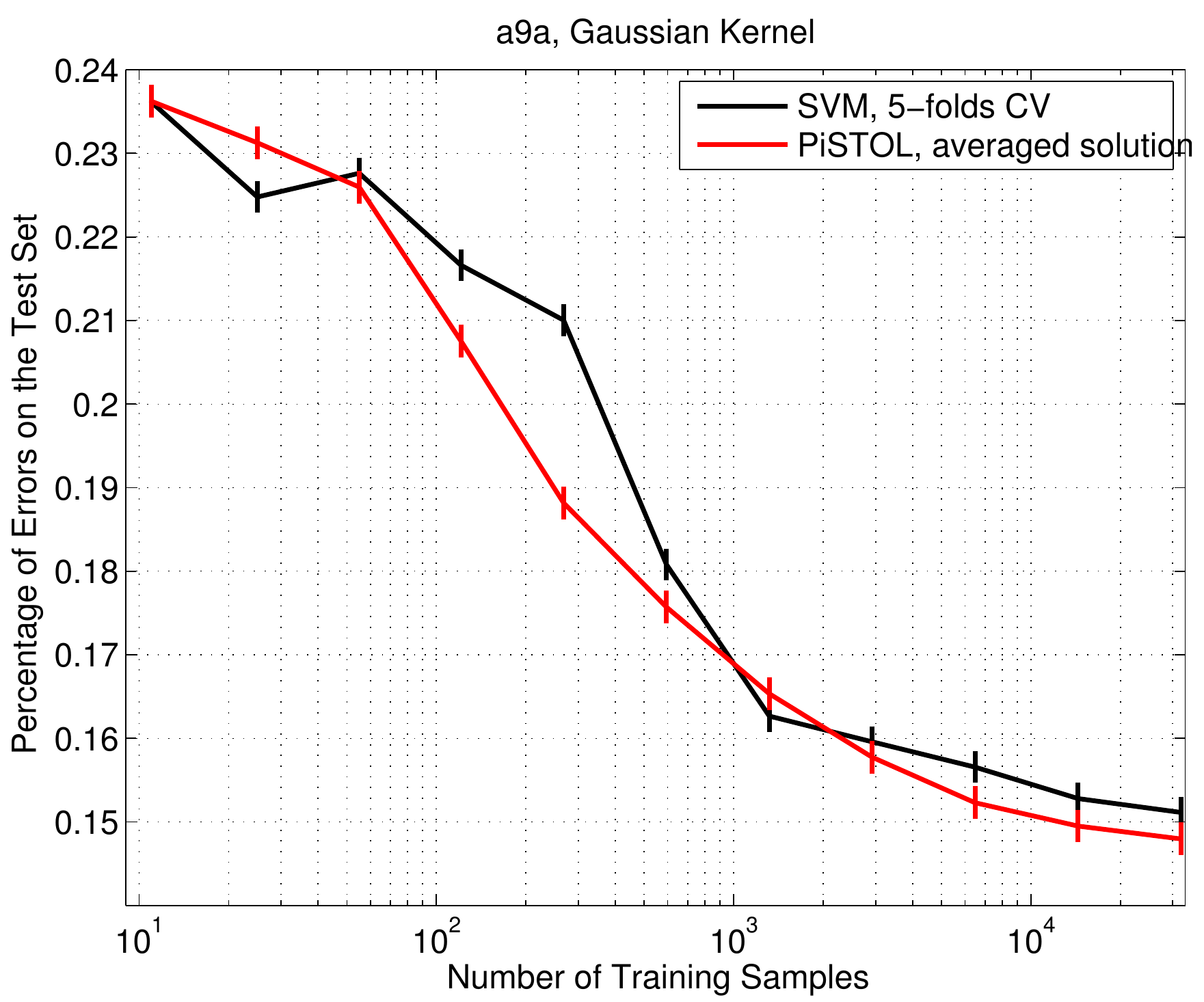}}
\subfigure{\includegraphics[width=0.30\textwidth]{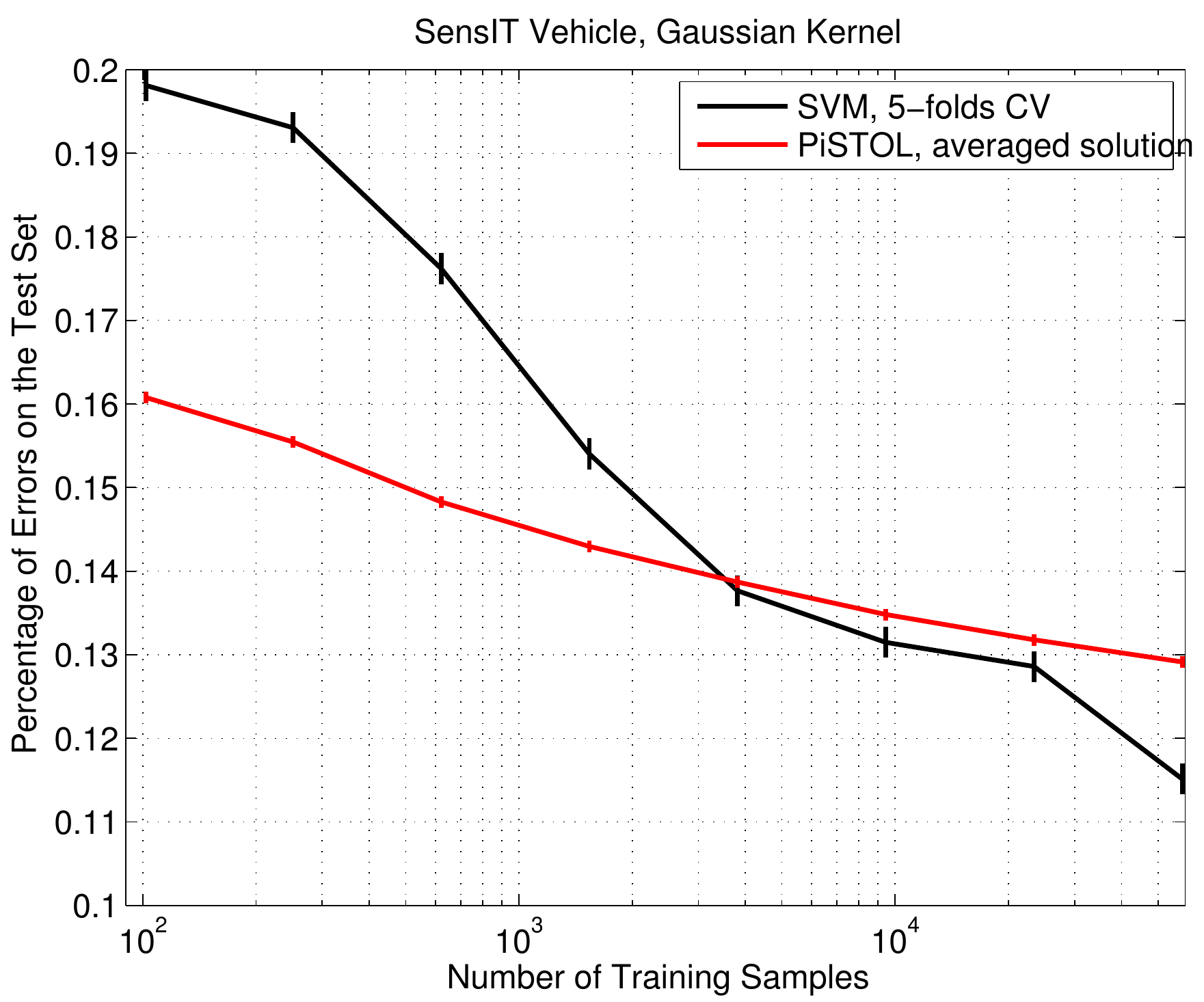}}
\subfigure{\includegraphics[width=0.30\textwidth]{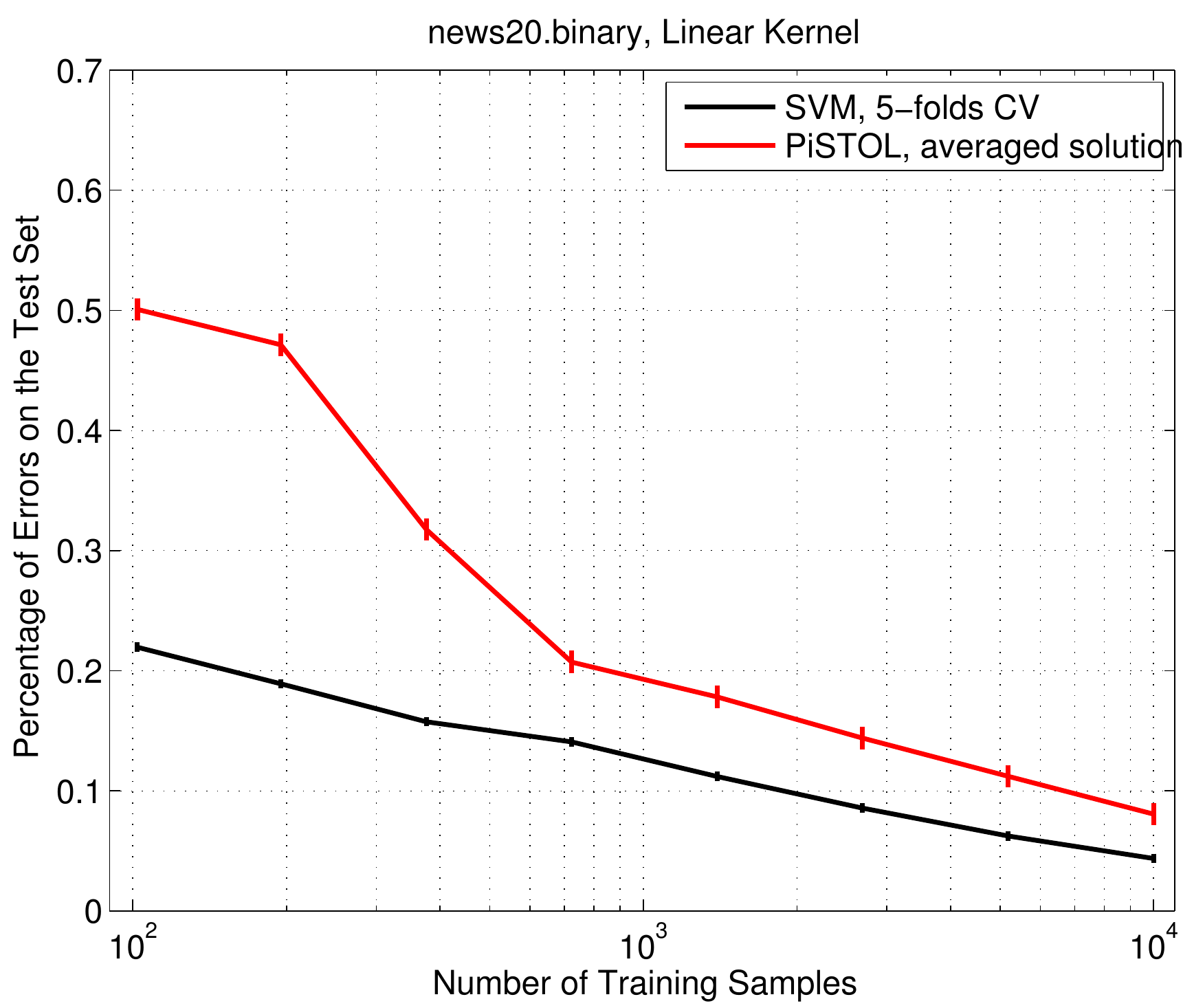}}
\caption{\footnotesize{Average test errors and standard deviations of \ac{PiSTOL} and \ac{SVM} w.r.t. the number of training samples over 5 random permutations, on \emph{a9a}, \emph{SensIT Vehicle}, and \emph{news20.binary}.}}
\label{fig:exp}
\end{center}
\end{figure}

\section{Discussion}
\label{sec:conc}

Borrowing from \ac{OCO} and statistical learning theory tools, we have presented the first parameter-free stochastic learning algorithm that achieves optimal rates of convergence w.r.t. the smoothness of the optimal predictor. In particular, the algorithm does not require any validation method for the model selection, rather it automatically self-tunes in an online and data-dependent way.

Even if this is mainly a theoretical work, we believe that it might also have a big potential in the applied world.
Hence, as a proof of concept on the potentiality of this method we have also run few preliminary experiments, to compare the performance of \ac{PiSTOL} to an \ac{SVM} using 5-folds cross-validation to select the regularization weight parameter.
The experiments were repeated with 5 random shuffles, showing the average and standard deviations over three datasets.\footnote{Datasets available at \url{http://www.csie.ntu.edu.tw/~cjlin/libsvmtools/datasets/}. The precise details to replicate the experiments are in the Appendix.} The latest version of LIBSVM was used to train the \ac{SVM}~\cite{ChangL01}.
We have that \ac{PiSTOL} closely tracks the performance of the tuned \ac{SVM} when a Gaussian kernel is used. Also, contrary to the common intuition, the stochastic approach of \ac{PiSTOL} seems to have an advantage over the tuned \ac{SVM} when the number of samples is small. Probably, cross-validation is a poor approximation of the generalization performance in that regime, while the small sample regime does not affect at all the analysis of \ac{PiSTOL}. Note that in the case of News20, a linear kernel is used over the vectors of size $1355192$. The finite dimensional case is not covered by our theorems, still we see that \ac{PiSTOL} seems to converge at the same rate of \ac{SVM}, just with a worse constant. It is important to note that the total time the 5-folds cross-validation plus the training with the selected parameter for the \ac{SVM} on 58000 samples of \emph{SensIT Vehicle} takes $\sim 6.5$ hours, while our unoptimized Matlab implementation of \ac{PiSTOL} less than 1 hour, $\sim  7$ times faster. The gains in speed are similar on the other two datasets.

This is the first work we know of in this line of research of stochastic adaptive algorithms for statistical learning, hence many questions are still open. In particular, it is not clear if high probability bounds can be obtained, as the empirical results hint, without additional hypothesis.
Also, we only proved convergence w.r.t. the $\ell$-risk, however for $\beta\geq\tfrac{1}{2}$ we know that $\flrho \in \HK$, hence it would be possible to prove the stronger convergence results on $\normK{\f_T-\flrho}$, e.g.~\cite{TarresY13}. Probably this would require a major change in the proof techniques used. Finally, it is not clear if the regret bound in Theorem~\ref{theo:main} can be improved to depend on the squared gradients. This would result in a $\tilde{\scO}(T^{-1})$ bound for the excess $\ell$-risk for smooth losses when $\RiskLoss(\flrho)=0$ and $\beta=\tfrac{1}{2}$.

\newpage
\ifdefined\EXTENDED
\else
\setlength{\bibsep}{0.4pt}
\fi
{\small
\bibliographystyle{plain}
\bibliography{learning}

\begin{thebibliography}{10}

\bibitem{AuerCG02}
P.~Auer, N.~Cesa-Bianchi, and C.~Gentile.
\newblock Adaptive and self-confident on-line learning algorithms.
\newblock {\em J. Comput. Syst. Sci.}, 64(1):48--75, 2002.

\bibitem{BachM13}
F.~Bach and E.~Moulines.
\newblock Non-strongly-convex smooth stochastic approximation with convergence
  rate {O}(1/n).
\newblock In {\em NIPS}, pages 773--781, 2013.

\bibitem{BartlettJM2006}
P.~L. Bartlett, M.~I. Jordan, and J.~D. McAuliffe.
\newblock {Convexity, Classification, and Risk Bounds}.
\newblock {\em Journal of the American Statistical Association},
  101(473):138--156, March 2006.

\bibitem{BauerPR2007}
F.~Bauer, S.~Pereverzev, and L.~Rosasco.
\newblock On regularization algorithms in learning theory.
\newblock {\em Journal of Complexity}, 23(1):52--72, February 2007.

\bibitem{BlanchardBP08}
G.~Blanchard, O.~Bousquet, and P.~Massart.
\newblock Statistical performance of support vector machines.
\newblock {\em The Annals of Statistics}, 36(2):489--531, 04 2008.

\bibitem{BottouB2008}
L.~Bottou and O.~Bousquet.
\newblock The tradeoffs of large scale learning.
\newblock In {\em Advances in Neural Information Processing Systems},
  volume~20, pages 161--168. NIPS Foundation, 2008.

\bibitem{CaponnettoV07}
A.~Caponnetto and E.~De Vito.
\newblock Optimal rates for the regularized least-squares algorithm.
\newblock {\em Foundations of Computational Mathematics}, 7(3):331--368, 2007.

\bibitem{Cesa-BianchiCG04}
N.~Cesa-Bianchi, A.~Conconi, and C.~Gentile.
\newblock On the generalization ability of on-line learning algorithms.
\newblock {\em IEEE Trans. on Information Theory}, 50(9):2050--2057, 2004.

\bibitem{Cesa-BianchiL06}
N.~Cesa-Bianchi and G.~Lugosi.
\newblock {\em Prediction, learning, and games}.
\newblock Cambridge University Press, 2006.

\bibitem{ChangL01}
C.-C. Chang and C.-J. Lin.
\newblock {\em {LIBSVM}: a library for support vector machines}, 2001.
\newblock Software available at \url{http://www.csie.ntu.edu.tw/~cjlin/libsvm}.

\bibitem{ChenWYZ04}
D.-R. Chen, Q.~Wu, Y.~Ying, and D.-X. Zhou.
\newblock Support vector machine soft margin classifiers: Error analysis.
\newblock {\em Journal of Machine Learning Research}, 5:1143--1175, 2004.

\bibitem{CuckerZ07}
F.~Cucker and D.~X. Zhou.
\newblock {\em Learning Theory: An Approximation Theory Viewpoint}.
\newblock Cambridge University Press, New York, NY, USA, 2007.

\bibitem{DuchiHS11}
J.~C. Duchi, E.~Hazan, and Y.~Singer.
\newblock Adaptive subgradient methods for online learning and stochastic
  optimization.
\newblock {\em Journal of Machine Learning Research}, 12:2121--2159, 2011.

\bibitem{StreeterM10}
H.~B.~McMahan M.~Streeter.
\newblock Less regret via online conditioning, 2010.
\newblock arXiv:1002.4862.

\bibitem{McMahanO14}
H.~B. McMahan and F.~Orabona.
\newblock Unconstrained online linear learning in {Hilbert} spaces: Minimax
  algorithms and normal approximations.
\newblock In {\em COLT}, 2014.

\bibitem{mendelson2010}
S.~Mendelson and J.~Neeman.
\newblock Regularization in kernel learning.
\newblock {\em The Annals of Statistics}, 38(1):526--565, 02 2010.

\bibitem{Nesterov2003}
Y.~Nesterov.
\newblock {\em Introductory lectures on convex optimization: A basic course},
  volume~87.
\newblock Springer, 2003.

\bibitem{Orabona13}
F.~Orabona.
\newblock Dimension-free exponentiated gradient.
\newblock In {\em Advances in Neural Information Processing Systems 26}, pages
  1806--1814. Curran Associates, Inc., 2013.

\bibitem{OrabonaKC09}
F.~Orabona, J.~Keshet, and B.~Caputo.
\newblock Bounded kernel-based online learning.
\newblock {\em Journal of Machine Learning Research}, 10:2571--2594, 2009.

\bibitem{RobbinsM51}
H.~Robbins and S.~Monro.
\newblock A stochastic approximation method.
\newblock {\em Annals of Mathematical Statistics}, 22:400--407, 1951.

\bibitem{RosascoTV14}
L.~Rosasco, A.~Tacchetti, and S.~Villa.
\newblock Regularization by early stopping for online learning algorithms,
  2014.
\newblock arXiv:1405.0042.

\bibitem{Rosenblatt58}
F.~Rosenblatt.
\newblock The {P}erceptron: A probabilistic model for information storage and
  organization in the brain.
\newblock {\em Psychological Review}, 65:386--407, 1958.

\bibitem{Shalev-ShwartzSS07}
S.~Shalev-Shwartz, Y.~Singer, and N.~Srebro.
\newblock Pegasos: {Primal} {Estimated} {sub-GrAdient} {SOlver} for {SVM}.
\newblock In {\em Proc. of ICML}, pages 807--814, 2007.

\bibitem{SmaleY05}
S.~Smale and Y.~Yao.
\newblock Online learning algorithms.
\newblock {\em Found. Comp. Math}, 6:145--170, 2005.

\bibitem{SrebroST10}
N.~Srebro, K.~Sridharan, and A.~Tewari.
\newblock Smoothness, low noise and fast rates.
\newblock In {\em Advances in Neural Information Processing Systems 23}, pages
  2199--2207. Curran Associates, Inc., 2010.

\bibitem{SteinwartC08}
I.~Steinwart and A.~Christmann.
\newblock {\em Support Vector Machines}.
\newblock Springer, 2008.

\bibitem{SteinwartHS09}
I.~Steinwart, D.~R. Hush, and C.~Scovel.
\newblock Optimal rates for regularized least squares regression.
\newblock In {\em COLT}, 2009.

\bibitem{StreeterM12}
M.~Streeter and B.~McMahan.
\newblock No-regret algorithms for unconstrained online convex optimization.
\newblock In {\em Advances in Neural Information Processing Systems 25}, pages
  2402--2410. Curran Associates, Inc., 2012.

\bibitem{TarresY13}
P.~Tarr\`{e}s and Y.~Yao.
\newblock Online learning as stochastic approximation of regularization paths,
  2013.
\newblock arXiv:1103.5538.

\bibitem{Tsybakov04}
A.~B. Tsybakov.
\newblock Optimal aggregation of classifiers in statistical learning.
\newblock {\em Ann. Statist.}, 32:135--166, 2004.

\bibitem{Yao10}
Y.~Yao.
\newblock On complexity issues of online learning algorithms.
\newblock {\em IEEE Transactions on Information Theory}, 56(12):6470--6481,
  2010.

\bibitem{YaoRC07}
Y.~Yao, L.~Rosasco, and A.~Caponnetto.
\newblock On early stopping in gradient descent learning.
\newblock {\em Constr. Approx.}, 26:289--315, 2007.

\bibitem{YingP08}
Y.~Ying and M.~Pontil.
\newblock Online gradient descent learning algorithms.
\newblock {\em Foundations of Computational Mathematics}, 8(5):561--596, 2008.

\bibitem{YingZ06}
Y.~Ying and D.-X. Zhou.
\newblock Online regularized classification algorithms.
\newblock {\em IEEE Transactions on Information Theory}, 52(11):4775--4788,
  2006.

\bibitem{Zhang04b}
T.~Zhang.
\newblock Solving large scale linear prediction problems using stochastic
  gradient descent algorithms.
\newblock In {\em Proc. of ICML}, pages 919--926, New York, NY, USA, 2004. ACM.

\bibitem{Zinkevich03}
M.~Zinkevich.
\newblock Online convex programming and generalized infinitesimal gradient
  ascent.
\newblock In {\em Proc. of ICML}, pages 928--936, 2003.

\end{thebibliography}
}

\ifdefined\EXTENDED

\appendix

\section{Per-coordinate Variant of PiSTOL}
Recently a number of algorithms with a different step size for each coordinate have been proposed, e.g.~\cite{StreeterM10,DuchiHS11}. The motivation is to take advantage of the sparsity of the features and, at the same time, to have a slower decaying step size for rare features. However, till now this adaptation has considered only the gradients and not to the norm of the competitor. Here we close this gap.

As shown in~\cite{StreeterM10}, these kind of algorithms can be very easily designed and analyzed just running an independent copy of the algorithm on each coordinate. Hence, we have the following corollary.
\begin{cor}
Assume the kernel $K$ is the linear one. Also, assume that the sequence of $\bx_t$ satisfies $\norm{\bx_t}_{\infty}\leq 1$ and the losses $\ell_t$ are convex and $L$-Lipschitz. Let $a>0$ such that $a \geq 2.25 L$, and $b=\frac{1}{d}$. Then, for any $\bu \in \R^d$, running a different copy of Algorithm~\ref{alg:opt} for each coordinate, the following regret bound holds
\[
\begin{split}
\sum_{t=1}^T \left[\ell_t(\bw^\top_t \bx_t) - \ell_t(\bu^\top \bx_t)\right] \leq &\norm{\bu}_\infty \sum_{i=1}^d \sqrt{ 2 a \left(L+ \sum_{t=1}^{T-1} |s_{i,t}| \right) \log \left(d \norm{\bu}_\infty \sqrt{a L T}+1\right) } \\
&\quad +\phi\left(a^{-1} L\right) \log \left(1+T\right),
\end{split}
\]
where
$
\phi(x):=\frac{x}{2} \, \frac{\exp\left(\frac{x}{2}\right) \left(x+1\right) +2}{1-x\exp\left(\frac{x}{2}\right)-x} \, \left(\exp\left(\frac{x}{2}\right) \left(x+1\right) +2\right)
$.
\end{cor}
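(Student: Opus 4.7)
The argument is the standard per-coordinate reduction of Streeter and McMahan: linearize the joint loss, treat each coordinate as an independent one-dimensional instance of PiSTOL, apply Theorem~\ref{theo:main} to each, and sum. First, by convexity of $\ell_t$, for any $s_t \in \partial \ell_t(\bw_t^\top \bx_t)$,
\[
\ell_t(\bw_t^\top \bx_t) - \ell_t(\bu^\top \bx_t) \le s_t \inn{\bw_t - \bu}{\bx_t} = \sum_{i=1}^d s_{i,t}\,(w_{i,t} - u_i),
\]
where $s_{i,t} := s_t\, x_{i,t}$. Since $|s_t|\le L$ by $L$-Lipschitzness and $|x_{i,t}|\le \norm{\bx_t}_\infty \le 1$, each $|s_{i,t}|\le L$, so the linearized one-dimensional losses $\tilde \ell_{i,t}(w) := s_{i,t}\, w$ are $L$-Lipschitz. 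It therefore suffices to bound each $\sum_t s_{i,t}(w_{i,t}-u_i)$ separately.

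Next, each coordinate's copy of Algorithm~\ref{alg:opt} is nothing but a one-dimensional instance of PiSTOL fed the scalar subgradients $s_{i,t}$ (with the trivial one-dimensional ``feature'' of unit norm so that the boundedness hypothesis of Theorem~\ref{theo:main} is satisfied), run with parameters $a$, $L$ and $b=1/d$. Applying Theorem~\ref{theo:main} to each such 1D copy against the scalar competitor $u_i\in\R$ gives
\[
\sum_{t=1}^T s_{i,t}\,(w_{i,t} - u_i) \;\le\; |u_i|\sqrt{\,2a\Big(L + \sum_{t=1}^{T-1}|s_{i,t}|\Big)\log\Big(d\,|u_i|\sqrt{aLT} + 1\Big)\,} + \tfrac{1}{d}\,\phi(a^{-1}L)\log(1+T),
\]
where the $d$ inside the logarithm comes from the ratio $|u_i|/b = d\,|u_i|$. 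Summing over $i=1,\ldots,d$, the $d$ residual terms telescope to $\phi(a^{-1}L)\log(1+T)$ thanks to the choice $b=1/d$, and the main term is upper-bounded by replacing $|u_i|$ by $\norm{\bu}_\infty$ both in the prefactor and inside the logarithm.

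There is no serious obstacle: once the linearization and per-coordinate reduction are in place, the result follows mechanically. The only checks needed are (i) that each 1D subproblem satisfies the hypotheses of Theorem~\ref{theo:main}---Lipschitz constant $L$ and unit-bounded feature norm, both verified above---and (ii) that the choice $b=1/d$ produces exactly the stated logarithmic factor $\log(d\norm{\bu}_\infty\sqrt{aLT}+1)$ as well as the aggregated residual $\phi(a^{-1}L)\log(1+T)$. Combining these per-coordinate bounds with the convexity decomposition of the first step yields the stated regret bound. The most delicate (but still routine) part is justifying the reduction to Theorem~\ref{theo:main} in a parameter-free way: the per-coordinate algorithm never needs to know $\norm{\bu}_\infty$, because Theorem~\ref{theo:main} holds uniformly over all 1D competitors $u_i$.
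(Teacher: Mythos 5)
Your proposal is correct and is exactly the argument the paper intends: the paper gives no detailed proof of this corollary, merely remarking (citing Streeter and McMahan) that one runs an independent one-dimensional copy of Algorithm~\ref{alg:opt} per coordinate, and your linearization-plus-per-coordinate reduction with $b=1/d$ is the standard way to make that remark precise. All the checks you flag (Lipschitzness of the linearized scalar losses, the $d$ appearing inside the logarithm from $|u_i|/b$, and the residual terms summing to $\phi(a^{-1}L)\log(1+T)$) line up with the stated bound.
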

Up to logarithmic terms, this regret bound is very similar to the one of AdaGrad~\cite{DuchiHS11}, with two importance differences. Using our notation, AdaGrad depends $\sum_{t=1}^{T-1} s^2_{i,t}$ rather than $\sum_{t=1}^{T-1} |s_{i,t}|$. In the case of Lipschitz losses and binary features, these two dependencies are essentially equivalent.
The second and more important difference is that AdaGrad depends on $\norm{\bu}^2_\infty$ instead of $\norm{\bu}_\infty$, or in alternative it assumes the knowledge of the (unknown) $\norm{\bu}_\infty$ to tune its step size.

\section{Convergence in $\Lonerho$}

Define $\normLOne{f}:=\int_\fX |f(x)| d \rho_\fX$. We now use the the standard assumption on the behavior of the approximation error in $\Lonerho$, see, e.g., \cite{YingZ06}.
\begin{theorem}
\label{theo:self_tuning_l1}
Assume that the samples $(\bx_t, y_t)_{t=1}^T$ are \ac{IID} from $\rho$ and $\ell_t(x)=\ell(y_t x)$. If for some $0< \beta \leq 1$ and $C > 0$, the pair $(\rho, K)$ satisfies
\begin{equation}
\label{eq:condition_l1}
\inf_{\f \in \HK} \normLOne{\f - \flrho} + \gamma \normK{\f}^2 \leq C \gamma^{\beta}, \ \forall \gamma>0
\end{equation}
then, under the assumptions of Theorem~\ref{theo:main}, the averaged solution of \ac{PiSTOL} satisfies
\[
\E[\RiskLoss(\bbarw_T)] - \RiskLoss(\flrho) \leq  \tilde{\scO}\left(T^{-\frac{\beta }{\beta+1}}\right).
\]
\end{theorem}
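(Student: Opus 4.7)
\textbf{Proof plan for Theorem~\ref{theo:self_tuning_l1}.} The plan is to combine the Lipschitz PiSTOL regret bound (Theorem~\ref{theo:main}) with the approximation-error condition \eqref{eq:condition_l1}, using the $\Lonerho$-norm as the bridge from excess $\ell$-risk to the regularizer appearing in \eqref{eq:condition_l1}.

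First, I would apply Theorem~\ref{theo:main} together with the standard online-to-batch conversion of~\cite{Cesa-BianchiCG04}. Since the losses $\ell_t(x) = \ell(y_t x)$ are $L$-Lipschitz, every subgradient satisfies $|s_t| \leq L$, so $\sum_{t=1}^{T-1}|s_t| \leq L T$ and the regret bound of Theorem~\ref{theo:main} simplifies to $\tilde{\scO}\bigl(\normK{\h}\sqrt{LT}\bigr)$ for any competitor $\h \in \HK$. Dividing by $T$ and taking expectations yields, for every $\h \in \HK$,
\[
\E[\RiskLoss(\bbarw_T)] - \RiskLoss(\h) \leq \tilde{\scO}\bigl(\normK{\h}\, T^{-1/2}\bigr).
\]
Second, I would bound $\RiskLoss(\h) - \RiskLoss(\flrho)$ in terms of $\normLOne{\h - \flrho}$. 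Because $\ell$ is $L$-Lipschitz and $|y|\leq 1$, pointwise $|\ell(y\h(x)) - \ell(y\flrho(x))| \leq L|\h(x)-\flrho(x)|$, so integrating against $\rho$ gives
\[
\RiskLoss(\h) - \RiskLoss(\flrho) \leq L\,\normLOne{\h - \flrho}.
\]
Combining the two inequalities, for any $\h \in \HK$,
\[
\E[\RiskLoss(\bbarw_T)] - \RiskLoss(\flrho) \leq \tilde{\scO}\bigl(\normK{\h}\,T^{-1/2}\bigr) + L\,\normLOne{\h-\flrho}.
\]

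The third step converts $\normK{\h}\,T^{-1/2}$ into something compatible with the squared-norm regularizer in \eqref{eq:condition_l1} via AM-GM: for any $\gamma > 0$,
\[
\normK{\h}\,T^{-1/2} \leq \gamma\,\normK{\h}^2 + \frac{1}{4\gamma T}.
\]
Substituting and then taking the infimum over $\h \in \HK$,
\[
\E[\RiskLoss(\bbarw_T)] - \RiskLoss(\flrho) \leq \tilde{\scO}\!\left( L\inf_{\h \in \HK}\!\left[\tfrac{\gamma}{L}\normK{\h}^2 + \normLOne{\h-\flrho}\right] + \frac{1}{\gamma T}\right) \leq \tilde{\scO}\!\left( C L^{1-\beta}\gamma^{\beta} + \frac{1}{\gamma T}\right),
\]
where in the last step I invoked \eqref{eq:condition_l1} with regularization parameter $\gamma/L$.

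Finally, balancing $\gamma^{\beta}$ against $(\gamma T)^{-1}$ by choosing $\gamma = T^{-1/(\beta+1)}$ makes both terms of order $T^{-\beta/(\beta+1)}$, yielding the claimed rate. The one delicate point is that the logarithmic factor in Theorem~\ref{theo:main} depends on $\normK{\h}$ inside the square root; since $\normK{\h}$ will be at most polynomial in $T$ at the balancing choice (via \eqref{eq:condition_l1}), this contributes only a $\mathrm{polylog}(T)$ factor absorbed by $\tilde{\scO}$, which I expect to be the only nuisance in turning this sketch into a complete argument.
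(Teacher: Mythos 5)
Your proposal is correct and follows essentially the same route as the paper: bound $\sum_t|s_t|\leq LT$ in Theorem~\ref{theo:main}, apply online-to-batch, use Lipschitzness to get $\RiskLoss(\h)-\RiskLoss(\flrho)\leq L\normLOne{\h-\flrho}$, split $\normK{\h}T^{-1/2}$ via Young's inequality into $\gamma\normK{\h}^2+\scO((\gamma T)^{-1})$, invoke \eqref{eq:condition_l1}, and balance $\gamma$. Your handling of the $\normK{\h}$ inside the logarithm also matches the paper's (which bounds the norm of the regularized minimizer to make that term $\mathrm{polylog}(T)$), so nothing is missing.
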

This Theorem improves over the result in~\cite{YingZ06}, where the worse bound $\scO\left(T^{\epsilon-\frac{\beta }{2(\beta+1)}}\right), \forall \epsilon>0$, was proved using the  prior knowledge of $\beta$. See~\cite{SteinwartC08} for a discussion on the condition \eqref{eq:condition_l1}.

\section{Details about the Empirical Results}

For the sake of the reproducibility of the experiments, we report here the exact details.
The loss used by \ac{PiSTOL} in all the experiments is a smoothed version of the hinge loss:
\[
\ell(x) = 
\begin{cases} 
0 & x \geq 1 \\ 
(1-x)^2 &  0 < x < 1 \\
1-2 x & x\leq 0.
\end{cases}
\]
For the \ac{SVM} we used the hinge loss.
The parameters of \ac{PiSTOL} were the same in all the experiments: $a=0.25$, $L=2$, $\beta=\sqrt{2a L T}$.
The \emph{a9a} dataset is composed by $32561$ training samples and $16281$ for testing, the dimension of the features is 123. The Gaussian kernel is
\[
K(\bx,\bx') = \exp\left(-\gamma \norm{\bx-\bx'}_2^2\right),
\]
where $\gamma$ was fixed to $0.04$, as done in~\cite{OrabonaKC09}. The ``C'' parameter of the SVM was tuned with cross-validation over the range $\{2^{-1},2^0,2^1,2^2,2^3\}$.
The \emph{SensIT Vehicle} dataset is a 3-class dataset composed by $78823$ training samples and $19705$ for testing. A binary classification task was built using the third class versus the other two, to have a very balanced problem. For the amount of time taken by LIBSVM to train a model, we only used a maximum of $58000$ training samples. The parameter $\gamma$ in the Gaussian kernel is $0.125$, again as in in~\cite{OrabonaKC09}. The range of the ``C'' parameter of the SVM was $\{2^0,2^1,2^2,2^3, 2^4, 2^5, 2^6\}$.
The \emph{news20.binary} dataset is composed by $19996$ samples with dimension $1355191$, and normalized to have $L_2$ norm equal to 1. The test set was composed by $10000$ samples drawn randomly from the training samples. The range of the ``C'' parameter of the SVM was $\{2^1,2^2,2^3, 2^4\}$.

\section{Proofs}

\subsection{Additional Definitions}
Given a closed and convex function $h : \HK \to [-\infty, + \infty]$, its Fenchel conjugate $h^* : \HK \to [-\infty, + \infty]$ is defined as $h^*(\btheta) = \sup_{\f \in \HK} \bigl( \innK{\f}{\btheta}  - h(\f)\bigr)$.

\subsection{Proof of \eqref{eq:zhang}}
From~\cite{Zhang04b}, it is possible to extract the following inequality
\[
\E[\RiskLoss(\bbarw_T)] \leq \inf_{\h \in \HK} \left(1- 2\eta\right)^{-1} \left[\RiskLoss(\h) + \frac{\normK{\h}^2}{2\eta T}\right] \leq \inf_{\h \in \HK} \left[\RiskLoss(\h) + \frac{\normK{\h}^2}{2\eta T}\right] + (\left(1- 2\eta\right)^{-1} -1) \RiskLoss(\boldsymbol{0}).
\]
Using the elementary inequalities $(1-2\eta)^{-1} -1 \leq 4 \eta, \ \forall 0 < \eta \leq \frac{1}{4}$, we have
\[
\E[\RiskLoss(\bbarw_T)] \leq \inf_{\h \in \HK} \RiskLoss(\h) + \frac{\normK{\h}^2}{\eta T} + 4\eta.
\]

\subsection{Proof of Theorem~\ref{theo:main}}
\label{sec-lemma}

In this section we prove the regret bound in the adversarial setting.
The key idea is of the proof is to design a time-varying potential function. Some of the ideas in the proof are derived from~\cite{Orabona13,McMahanO14}.

In the proof of Theorem~\ref{theo:main} we also use the following technical lemmas.
\begin{lemma}
\label{lemma:strong_smooth}
Let $a,b,c\in\R$, $a,c\geq0$.
\begin{itemize}
\item if $b>0$, then
\[
\exp\left(\frac{2a\,b+b^2}{2c}\right) \leq 1+\frac{a\,b}{c} + \frac{b^2}{2c}\left(\frac{(a+b)^2}{c}+1\right)\exp\left(\frac{2a\,b+b^2}{2c}\right).
\]
\item if $b\leq0$, then
\[
\exp\left(\frac{2a\,b+b^2}{2c}\right) \leq 1+\frac{a\,b}{c} + \frac{b^2}{2c}\left(\frac{a^2+b^2}{c}+1\right)\exp\left(\frac{b^2}{2c}\right).
\]
\end{itemize}
\end{lemma}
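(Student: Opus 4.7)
The plan is to apply Taylor's theorem with the Lagrange form of the remainder to $\exp$ at $0$: write $\exp(x) = 1 + x + \tfrac{x^2}{2}\exp(\xi)$ for some $\xi$ strictly between $0$ and $x$, and choose $x = \tfrac{2ab+b^2}{2c} = \tfrac{ab}{c} + \tfrac{b^2}{2c}$. Rearranging gives
\[
\exp(x) \;\leq\; 1 + \frac{ab}{c} + \frac{b^2}{2c} + \frac{x^2}{2}\,M,
\]
where $M$ is any valid upper bound on $\exp(\xi)$. The two displayed inequalities correspond to different choices of $M$ together with a crude algebraic upper bound on $x^2 = \tfrac{b^2(2a+b)^2}{4c^2}$. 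The last step in each case is to absorb the free constant $\tfrac{b^2}{2c}$ into $\tfrac{b^2}{2c}M$, which is legitimate whenever $M\geq 1$, and then to factor $\tfrac{b^2}{2c}$ out of the remainder.

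For case 1 ($b>0$), $a,c\geq 0$ forces $x\geq 0$, so $\xi\in[0,x]$ gives $\exp(\xi)\leq\exp(x)$, and $\exp(x)\geq 1$ permits the replacement of the constant $\tfrac{b^2}{2c}$ by $\tfrac{b^2}{2c}\exp(x)$. The only remaining arithmetic check is $(2a+b)^2\leq 4(a+b)^2$, i.e.\ $b(4a+3b)\geq 0$, which is automatic for $a,b\geq 0$.

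For case 2 ($b\leq 0$) the sign of $x$ depends on that of $2a+b$, so I would split the analysis on $x\leq 0$ versus $x>0$ but verify that, in either sub-case, $\exp(\xi)\leq\exp\!\bigl(\tfrac{b^2}{2c}\bigr)$. If $x\leq 0$ then $\xi\in[x,0]$, hence $\exp(\xi)\leq 1\leq\exp(b^2/(2c))$; if $x>0$, the inequality $\tfrac{ab}{c}\leq 0$ yields $x\leq\tfrac{b^2}{2c}$, so $\exp(\xi)\leq\exp(x)\leq\exp(b^2/(2c))$. The arithmetic check is now $(2a+b)^2\leq 4(a^2+b^2)$, which reduces to $4ab\leq 3b^2$ and is automatic for $a\geq 0$, $b\leq 0$.

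No step is a genuine obstacle; the only choice requiring a small amount of care is picking the crude upper bound on $(2a+b)^2$ that matches the form of the stated right-hand side. This is precisely what drives the asymmetric dichotomy ``$(a+b)^2$ vs.\ $a^2+b^2$'' between the two cases and is exactly where the sign hypothesis on $b$ enters.
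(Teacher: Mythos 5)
Your proof is correct, but it takes a different route from the paper's. The paper Taylor-expands the composite function $g(b)=\exp\bigl(\tfrac{(a+b)^2}{2c}\bigr)$ to second order in $b$ around $b=0$; the Lagrange remainder then directly produces the factor $\bigl(\tfrac{(a+\xi)^2}{c}+1\bigr)\exp\bigl(\tfrac{(a+\xi)^2}{2c}\bigr)$ for some $\xi$ between $0$ and $b$, and the two cases follow by bounding $(a+\xi)^2$ over the endpoint(s) of the interval (by $(a+b)^2$ when $b>0$, by $a^2+b^2$ when $b\leq 0$), after dividing through by $\exp(a^2/(2c))$. You instead expand $\exp$ itself at $0$ in the scalar argument $x=\tfrac{2ab+b^2}{2c}$, which yields the cleaner remainder $\tfrac{x^2}{2}\exp(\xi)$ but then requires extra bookkeeping: the crude bounds $(2a+b)^2\leq 4(a+b)^2$ resp.\ $(2a+b)^2\leq 4(a^2+b^2)$, the absorption of the leftover constant $\tfrac{b^2}{2c}$ using $M\geq 1$, and the sub-case analysis on the sign of $x$ to certify $\exp(\xi)\leq M$ in the second case. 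Both arguments are sound and use exactly the sign hypotheses on $a$, $b$, $c$ in the same places; the paper's version gets the stated factors with no algebraic slack (they arise verbatim from $g''$), while yours is more elementary in that it only ever differentiates $\exp$ and never the composite. (Both proofs, yours and the paper's, implicitly need $c>0$ for the statement to make sense, which holds in the application since $\alpha_t\geq aL>0$.)
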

\begin{proof}
Consider the function $g(b)=\exp\left(\frac{(a+b)^2}{2c}\right)$.
Using a second order Taylor expansion around $0$ we have
\begin{equation}
\label{eq:proof_thm1_1}
g(b) = \exp\left(\frac{a^2}{2c}\right) + \frac{a b}{c} \exp\left(\frac{a^2}{2c}\right) + \left(\frac{(a+\xi)^2}{c}+1\right)\exp\left(\frac{(a+\xi)^2}{2c}\right)\frac{b^2}{2c}
\end{equation}
for some $\xi$ between $0$ and $b$. Note that r.h.s of \eqref{eq:proof_thm1_1} is a convex function w.r.t. $\xi$, so it is maximized when $\xi=0$ or $\xi=b$.
Hence, the first inequality is obtained using upper bounding $\xi$ with $b$, and $(a+\xi)^2$ with $a^2+b^2$ in the second case.
\end{proof}
\begin{lemma}{\cite[Lemma 14]{McMahanO14}}
\label{lemma:bound_fenchel}
Define $\Psi(\btheta) = b \exp{ \frac{\normK{\btheta}^2}{2 \alpha}}$, for $\alpha, b>0$. Then
\[
\Psi^*(\f) \leq \normK{\f} \sqrt{ 2 \alpha \log \left(\frac{\sqrt{\alpha} \normK{\f}}{b}+1\right) } -b.
\]
\end{lemma}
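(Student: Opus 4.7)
The plan is to reduce the Fenchel conjugate computation to a one-dimensional problem and then bound the inverse of the derivative explicitly. Since $\Psi(\btheta)=b\exp(\normK{\btheta}^2/(2\alpha))$ depends on $\btheta$ only through $\normK{\btheta}$, Cauchy--Schwarz shows that the supremum defining $\Psi^{*}(\f)$ is attained with $\btheta$ aligned to $\f$. Writing $\Phi(r):=b\exp(r^{2}/(2\alpha))$ for $r\ge 0$, I would therefore identify $\Psi^{*}(\f)=\Phi^{*}(\normK{\f})$, reducing the claim to
\[
\Phi^{*}(x)\le x\sqrt{2\alpha\log(\sqrt{\alpha}\,x/b+1)}-b,\qquad x\ge 0.
\]

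Next I would exploit the classical conjugate identity $\Phi^{*}(x)=\Phi^{*}(0)+\int_{0}^{x}(\Phi')^{-1}(s)\,ds$, which is valid because $\Phi$ is strictly convex and smooth on $[0,\infty)$. Observing that $\Phi^{*}(0)=-\Phi(0)=-b$, the problem is to majorize $(\Phi')^{-1}(s)$. Since $\Phi'$ is strictly increasing, $(\Phi')^{-1}(s)\le r_{s}$ whenever $\Phi'(r_{s})\ge s$. I would take the natural candidate $r_{s}:=\sqrt{2\alpha\log(\sqrt{\alpha}\,s/b+1)}$, chosen so that $e^{r_{s}^{2}/(2\alpha)}=\sqrt{\alpha}\,s/b+1$ exactly. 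Substituting into $\Phi'(r)=(br/\alpha)\,e^{r^{2}/(2\alpha)}$ gives $\Phi'(r_{s})=r_{s}(s\sqrt{\alpha}+b)/\alpha$, so the desired inequality reduces to $r_{s}\ge s\alpha/(s\sqrt{\alpha}+b)$.

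The hard step, though still elementary, is this scalar inequality. Setting $y:=s\sqrt{\alpha}/b\ge 0$, it becomes $\sqrt{2\log(1+y)}\ge y/(1+y)$. I would derive it from the standard bound $\log(1+y)\ge y/(1+y)$ (proved by differentiating $g(y)=\log(1+y)-y/(1+y)$ and noting $g(0)=0$, $g'(y)=y/(1+y)^{2}\ge 0$), combined with $y/(1+y)\le 1$, which gives $(y/(1+y))^{2}\le y/(1+y)\le \log(1+y)\le 2\log(1+y)$.

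Finally, since $s\mapsto r_{s}$ is nondecreasing,
\[
\int_{0}^{x}(\Phi')^{-1}(s)\,ds \le \int_{0}^{x}\sqrt{2\alpha\log(\sqrt{\alpha}\,s/b+1)}\,ds \le x\sqrt{2\alpha\log(\sqrt{\alpha}\,x/b+1)},
\]
and adding $\Phi^{*}(0)=-b$ produces the claimed bound. The main obstacle is really only the scalar inequality in the third step; everything else is bookkeeping once the reduction to the integral of $(\Phi')^{-1}$ and the explicit choice of $r_{s}$ are made.
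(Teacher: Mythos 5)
Your proof is correct. Note that the paper itself does not prove this lemma at all: it is imported verbatim as a citation to Lemma~14 of McMahan and Orabona, so there is nothing in the text to compare against step by step. Your derivation is a legitimate self-contained replacement. The reduction $\Psi^*(\f)=\Phi^*(\normK{\f})$ via Cauchy--Schwarz is standard and sound, the identity $\Phi^*(x)=-b+\int_0^x(\Phi')^{-1}(s)\,ds$ holds because $\Phi'$ is a strictly increasing bijection of $[0,\infty)$ onto itself, and the key scalar step checks out: with $y=s\sqrt{\alpha}/b$ one indeed needs $\sqrt{2\log(1+y)}\ge y/(1+y)$, which follows from $\log(1+y)\ge y/(1+y)$ and $y/(1+y)\le 1$ exactly as you argue; the final majorization of the integral by $x$ times the (nondecreasing) integrand at $s=x$ is also fine. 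The cited reference obtains the same bound by solving the first-order condition for the conjugate, which leads to a Lambert-$W$-type quantity that must then be bounded by a logarithm; your route avoids inverting $\Phi'$ explicitly by exhibiting an upper bound $r_s$ on the inverse chosen so that the logarithm appears by construction, which makes the argument more elementary at the cost of the one ad hoc scalar inequality. Either way the constant and the $-b$ term come out identically, so the lemma is fully established.
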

\begin{lemma}
\label{lemma:log_bound}
For all $\delta, x_1,\dots, x_T\in\R_+$, we have
\begin{align*}
\sum_{t=1}^T \frac{x_t}{\delta + \sum_{i=1}^{t} x_i} \leq \ln \left( \frac{\sum_{t=1}^T x_t}{\delta}+1\right).
\end{align*}
\end{lemma}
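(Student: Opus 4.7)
The plan is to bound each summand by an integral of $1/u$ over an appropriate interval, then telescope. Specifically, let $S_0 = \delta$ and $S_t = \delta + \sum_{i=1}^t x_i$ for $t \geq 1$, so $S_t - S_{t-1} = x_t$ and $S_t \geq S_{t-1} > 0$. The function $u \mapsto 1/u$ is positive and decreasing on $(0,\infty)$, so on the interval $[S_{t-1}, S_t]$ its minimum value equals $1/S_t$. Hence
\[
\frac{x_t}{S_t} \;=\; \frac{S_t - S_{t-1}}{S_t} \;\leq\; \int_{S_{t-1}}^{S_t} \frac{du}{u} \;=\; \ln \frac{S_t}{S_{t-1}}.
\]

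With this per-term bound in hand, the second step is just telescoping: summing over $t = 1,\dots,T$ collapses the logarithms, giving
\[
\sum_{t=1}^T \frac{x_t}{\delta + \sum_{i=1}^t x_i} \;\leq\; \sum_{t=1}^T \ln \frac{S_t}{S_{t-1}} \;=\; \ln \frac{S_T}{S_0} \;=\; \ln\!\left(\frac{\sum_{t=1}^T x_t}{\delta} + 1\right),
\]
which is the desired inequality.

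There is essentially no obstacle here; the only thing to verify is the elementary inequality $(a-b)/a \leq \ln(a/b)$ for $0 < b \leq a$, which follows either from the integral representation above or equivalently from the standard bound $\ln(1+z) \geq z/(1+z)$ applied with $z = (a-b)/b$. If one prefers a proof without integrals, an alternative is to use concavity of $\ln$ directly: $\ln(S_t/S_{t-1}) = -\ln(1 - x_t/S_t) \geq x_t/S_t$ since $-\ln(1-y) \geq y$ for $y \in [0,1)$, and $x_t/S_t \in [0,1)$ because $x_t \geq 0$ and $S_t \geq \delta > 0$. Either route immediately closes the lemma.
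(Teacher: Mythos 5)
Your proof is correct and follows essentially the same route as the paper: both reduce to the elementary inequality $\frac{v_t - v_{t-1}}{v_t} \leq \ln\frac{v_t}{v_{t-1}}$ (the paper derives it from concavity of the logarithm, you from the integral of $1/u$ or equivalently $-\ln(1-y)\geq y$, which are the same fact) and then telescope the sum. No gaps.
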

\begin{proof}
Define $v_t = \delta + \sum_{i=1}^t x_i$. The concavity of the logarithm implies $\ln b \leq \ln a + \frac{b-a}{a}$ for all $a,b>0$. Hence we have
\begin{align*}
\sum_{t=1}^T \frac{x_t}{\delta + \sum_{i=1}^{t} x_i}= \sum_{t=1}^T \frac{x_t}{v_t} = \sum_{t=1}^T \frac{v_t-v_{t-1}}{v_t} \leq \sum_{t=1}^T \ln \frac{v_t}{v_{t-1}}= \ln \frac{v_T}{v_0} = \ln \frac{\delta + \sum_{t=1}^T a_t}{\delta}.
\end{align*}
\end{proof}
We are now ready to prove Theorem~\ref{theo:main}. Differently from the proof methods in~\cite{McMahanO14}, here the potential functions will depend explicitly on the sum of the past gradients, rather than simple on the time.
\begin{proof}[Proof of Theorem~\ref{theo:main}]
Without loss of generality and for simplicity, the proof uses $b=1$.
For a time-varying function $\Psi^*_t:\HK \rightarrow \R$, let $\Delta_t=\Psi^*_t(\normK{\btheta_{t}})-\Psi^*_{t-1}(\normK{\btheta_{t-1}})$.
Also define $\btheta_t = \sum_{i=1}^t k_t$, with $k_t \in \mathcal{H}$.

The Fenchel-Young inequality states that $\Psi(\f)+\Psi^*(\btheta) \ge \innK{\f}{\btheta}$ for all $\f,\btheta \in \HK$.
Hence, it implies that, for any sequence of $k_t \in \HK$ and any $\h \in \HK$, we have
\begin{align*}
\sum_{t=1}^{T} \Delta_t 
&= \Psi^*_{T} (\normK{\btheta_{T}}) - \Psi^*_{0} (\normK{\btheta_{0}}) 
\geq \innK{\h}{\btheta_{T}} - \Psi_T(\normK{\h}) - \Psi^*_{0} (\normK{\btheta_{0}})  \\
&= - \Psi_T(\normK{\h}) + \sum_{t=1}^{T} \innK{\h}{\bzt} - \Psi^*_{0} (\normK{\btheta_{0}}).
\end{align*}
Hence, using the definition of $\btheta_T$, we have
\begin{equation*}
\sum_{t=1}^{T} \innK{\h - \f_t}{\bzt}  \leq \Psi_T(\normK{\h}) + \Psi^*_{0} (\normK{\btheta_{0}}) 
+ \sum_{t=1}^{T} \left(\Psi^*_t(\normK{\btheta_{t}})-\Psi^*_{t-1}(\normK{\btheta_{t-1}})  - \innK{\f_t}{\bzt}\right). \label{eq:regret_md}
\end{equation*}

We now use the notation in Algorithm~\ref{alg:opt}, and set $\bzt=- \partial \ell_t(\f_t) k(\bx_t, \cdot)$, and $\Psi^*_t(x)=b \exp(\frac{x^2}{2 \alpha_t})$. Observe that, by the hypothesis on $\ell_t$, we have $\normK{\bzt}\leq L$.


Observe that, with the choice of $\alpha_t$, we have the following inequalities that will be used often in the proof:
\begin{itemize}
\item $\frac{\normK{\btheta_{t}}}{\alpha_t} 
\leq \frac{\normK{\sum_{i=1}^{t} \bzi}}{\alpha_t}
\leq \frac{1}{a}.$

\item $\frac{\normK{\btheta_{t-1}} \normK{\bzt}}{\alpha_t} 
\leq \normK{\bzt} \frac{\normK{\sum_{i=1}^{t-1} \bzi}}{\alpha_t}
\leq \frac{\normK{\bzt}}{a}
\leq \frac{L}{a}.$

\item $\frac{2 \normK{\btheta_{t-1}} \normK{\bzt}+\normK{\bzt}^2}{2 \alpha_t} 
\leq \normK{\bzt} \frac{\normK{\sum_{i=1}^{t-1} \bzi}+\normK{\bzt}}{\alpha_t}
\leq \normK{\bzt} \frac{\normK{\sum_{i=1}^{t} \bzi}}{\alpha_t}
\leq \frac{\normK{\bzt}}{a}
\leq \frac{L}{a}.$
\end{itemize}

We have
\begin{align}
&\Psi^*_t(\normK{\btheta_{t}})-\Psi^*_{t-1}(\normK{\btheta_{t-1}}) -\innK{\f_t}{\bzt}= \exp\left(\frac{\normK{\btheta_{t}}^2}{2 \alpha_t}\right) - \exp\left(\frac{\normK{\btheta_{t-1}}^2}{2 \alpha_{t-1}}\right) -\innK{\f_t}{\bzt} \nonumber \\
&= \exp\left(\frac{\normK{\btheta_{t-1}}^2}{2 \alpha_t}\right) \left[\exp\left(\frac{2 \innK{\btheta_{t-1}}{\bzt}+\normK{\bzt}^2}{2 \alpha_t}\right) - ( 1+\frac{\innK{\btheta_{t-1}}{\bzt}}{\alpha_{t-1}} ) \exp\left(\frac{a \normK{\bzt} \normK{\btheta_{t-1}}^2}{2 \alpha_t \alpha_{t-1}}\right) \right] \label{eq:main_eq}
\end{align}
Consider the max of the r.h.s. of the last equality w.r.t. $\innK{\btheta_{t-1}}{\bzt}$. Being a convex function of $\innK{\btheta_{t-1}}{\bzt}$, the maximum is achieved at the border of the domain. Hence, $\langle \btheta_{t-1}, \bzt \rangle= c_t \normK{\btheta_{t-1}} \normK{\bzt}$ where $c_t=1$ or $-1$. We will analyze the two case separately.

\textbf{Case positive:}
Consider the case that $c_t=1$. Considering only the expression in parenthesis in \eqref{eq:main_eq}, we have
\begin{align}
&\exp\left(\frac{2 \normK{\btheta_{t-1}} \normK{\bzt}+\normK{\bzt}^2}{2 \alpha_t}\right) - \left( 1+\frac{ \normK{\btheta_{t-1}} \normK{\bzt}}{\alpha_{t-1}} \right) \exp\left(\frac{a \normK{\bzt} \normK{\btheta_{t-1}}^2}{2 \alpha_t \alpha_{t-1}}\right) \nonumber \\
&\quad \leq 1+\frac{\normK{\btheta_{t-1}} \normK{\bzt}}{\alpha_t} - \left(1+ \frac{\normK{\btheta_{t-1}} \normK{\bzt}}{\alpha_{t-1}} \right) \exp\left(\frac{a \normK{\bzt} \normK{\btheta_{t-1}}^2}{2 \alpha_t \alpha_{t-1}}\right) \nonumber\\
&\qquad +\frac{\normK{\bzt}^2}{2\alpha_t}\exp\left(\frac{2 \normK{\btheta_{t-1}} \normK{\bzt}+\normK{\bzt}^2}{2 \alpha_t}\right) \left(\frac{(\normK{\btheta_{t-1}}+\normK{\bz})^2}{\alpha_t}+1\right)  \label{eq:main_eq_pos} \\
&\quad \leq 1+ a\normK{\bzt} \frac{L}{a}\exp\left(\frac{L}{a}\right) \frac{\normK{\btheta_{t-1}}^2}{2 \alpha_t^2} + \frac{\normK{\bzt}^2}{2\alpha_t}\exp\left(\frac{L}{a}\right) \left(\frac{2L}{a} +1\right) - \exp\left(\frac{a \normK{\bzt} \normK{\btheta_{t-1}}^2}{2 \alpha_t \alpha_{t-1}}\right), \nonumber
\end{align}
where in the first inequality we used the first statement of Lemma~\ref{lemma:strong_smooth}.
We now use the fact that $A:=\frac{L}{a}\exp\left(\frac{L}{a}\right)< 1$ and the elementary inequality $\exp(x)\geq x+1$, to have
\begin{align}
&1+ a\normK{\bzt} \frac{L}{a}\exp\left(\frac{L}{a}\right) \frac{\normK{\btheta_{t-1}}^2}{2 \alpha_t^2} + \frac{\normK{\bzt}^2}{2\alpha_t}\exp\left(\frac{L}{a}\right) \left(\frac{2L}{a} +1\right) - \exp\left(\frac{a \normK{\bzt} \normK{\btheta_{t-1}}^2}{2 \alpha_t \alpha_{t-1}}\right) \nonumber \\
&\quad \leq \left(A -1\right)\frac{a\normK{\bzt} \normK{\btheta_{t-1}}^2}{2 \alpha_t \alpha_{t-1}} + \frac{\normK{\bzt}^2}{2\alpha_t}\exp\left(\frac{L}{a}\right) \left(\frac{2L}{a} +1\right) \label{eq:for_bound_poscase} \\
&\quad \leq \left(A -1\right)\frac{a\normK{\bzt} \normK{\btheta_{t-1}}^2}{2 \alpha_t \alpha_{t-1}} + \frac{\normK{\bzt} L}{2\alpha_t}\exp\left(\frac{L}{a}\right) \left(\frac{2L}{a} +1\right).  \label{eq:last_poscase}
\end{align}
This quantity is non-positive iff $\frac{\normK{\btheta_{t-1}}^2}{\alpha_{t-1}} \geq \frac{A}{1-A} \left(\frac{2L}{a} +1\right)$.

We now consider the case of $\frac{A}{1-A} \left(\frac{2L}{a} +1\right) > \frac{\|\btheta_{t-1}\|^2}{\alpha_{t-1}} \geq \frac{\|\btheta_{t-1}\|^2}{\alpha_{t}}$. In this case, from \eqref{eq:for_bound_poscase}, we have
\begin{equation}
\label{eq:final_poscase}
f^*_t(\normK{\btheta_{t}})-f^*_{t-1}(\normK{\btheta_{t-1}}) -\innK{\f_t}{\bzt} \leq \exp\left( \frac{A}{2(1-A)} \left(\frac{2L}{a} +1\right) \right) \exp\left(\frac{L}{a}\right)\left(\frac{2L}{a} +1\right)\frac{\normK{\bzt}^2}{2\alpha_t} .
\end{equation}

\textbf{Case negative:}
Now consider the case that $c_t = -1$. 
So we have
\begin{align}
&\exp\left(\frac{\normK{\btheta_{t-1}}^2}{2 \alpha_{t}}\right) \left[\exp\left(\frac{- 2 \normK{\btheta_{t-1}} \normK{\bzt}+\normK{\bzt}^2}{2 \alpha_t}\right) + \left( \frac{ \normK{ \btheta_{t-1}} \normK{\bzt}}{\alpha_{t-1}} -1\right) \exp\left(\frac{a \normK{\bzt} \normK{\btheta_{t-1}}^2}{2 \alpha_t \alpha_{t-1}}\right) \right] \nonumber \\
&\quad \leq \exp\left(\frac{\normK{\btheta_{t-1}}^2}{2 \alpha_{t}}\right) \left[1-\frac{\normK{\btheta_{t-1}} \normK{\bzt}}{\alpha_t} + \left(\frac{\normK{\btheta_{t-1}} \normK{\bzt}}{\alpha_{t-1}}-1 \right) \exp\left(\frac{a \normK{\bzt} \normK{\btheta_{t-1}}^2}{2 \alpha_t \alpha_{t-1}}\right)\right. \nonumber\\
&\qquad \left. +\frac{\normK{\bzt}^2}{2\alpha_t}\exp\left(\frac{\normK{\bzt}^2}{2 \alpha_t}\right) \left(\frac{\normK{\btheta_{t-1}}^2+\normK{\bz}^2}{\alpha_t}+1\right) \right], \nonumber
\end{align}
where in the inequality we used the second statement of Lemma~\ref{lemma:strong_smooth}.
Considering again only the expression in the parenthesis we have
\begin{align}
&1-\frac{\normK{\btheta_{t-1}} \normK{\bzt}}{\alpha_t}+\frac{\normK{\bzt}^2}{2\alpha_t}\exp\left(\frac{\normK{\bzt}^2}{2 \alpha_t}\right) \left(\frac{\normK{\btheta_{t-1}}^2+\normK{\bzt}^2}{\alpha_t}+1\right) \nonumber \\
&\qquad \left(\frac{\normK{\btheta_{t-1}} \normK{\bzt}}{\alpha_{t-1}} -1\right) \exp\left(\frac{a \normK{\bzt} \normK{\btheta_{t-1}}^2}{2 \alpha_t \alpha_{t-1}}\right) \nonumber \\
&\quad\leq \frac{a \normK{\btheta_{t-1}} \normK{\bzt}^2}{\alpha_t \alpha_{t-1}}+\frac{\normK{\bzt}^2}{2\alpha_t}\exp\left(\frac{L}{2a}\right) \left(\frac{\normK{\btheta_{t-1}}^2}{\alpha_t}+\frac{L}{a}+1\right) \nonumber \\
&\qquad + \left(\frac{\normK{\btheta_{t-1}} \normK{\bzt}}{\alpha_{t-1}} -1\right) \left(\exp\left(\frac{a \normK{\bzt} \normK{\btheta_{t-1}}^2}{2 \alpha_t \alpha_{t-1}}\right)-1\right) \nonumber \\
&\quad\leq \frac{\normK{\btheta_{t-1}}^2 \normK{\bzt}^2}{2\alpha_t \alpha_{t-1}}\exp\left(\frac{L}{2a}\right) + \frac{\normK{\bzt}^2}{2\alpha_t}\left(\exp\left(\frac{L}{2a}\right) \left(\frac{L}{a}+1\right) +2\right) + \left(\frac{L}{a} -1\right) \frac{a \normK{\bzt} \normK{\btheta_{t-1}}^2}{2 \alpha_t \alpha_{t-1}} \nonumber\\
&\quad\leq \frac{\normK{\bzt}^2}{2\alpha_t}\left(\exp\left(\frac{L}{2a}\right) \left(\frac{L}{a}+1\right) +2\right) + \left(\frac{L}{a}\exp\left(\frac{L}{2a}\right)+ \frac{L}{a} -1\right) \frac{a \normK{\bzt} \normK{\btheta_{t-1}}^2}{2 \alpha_t \alpha_{t-1}}. \label{eq:for_bound_negcase}
\end{align}
We have that this quantity is non-positive if
$
\frac{\normK{\btheta_{t-1}}^2}{\alpha_{t-1}} \geq \frac{\normK{\bzt}}{a}\frac{\exp\left(\frac{L}{2a}\right) \left(\frac{L}{a}+1\right) +2}{1-\frac{L}{a}\exp\left(\frac{L}{2a}\right)-\frac{L}{a}}
$.
Hence we now consider the case that $\frac{\normK{\btheta_{t-1}}^2}{\alpha_{t-1}} < \frac{\normK{\bzt}}{a}\frac{\exp\left(\frac{L}{2a}\right) \left(\frac{L}{a}+1\right) +2}{1-\frac{L}{a}\exp\left(\frac{L}{2a}\right)-\frac{L}{a}}$.

From \eqref{eq:for_bound_negcase} we have
\begin{align}
\label{eq:final_negcase}
&\Psi^*_t(\normK{\btheta_{t}})-\Psi^*_{t-1}(\normK{\btheta_{t-1}}) -\innK{\f_t}{\bzt} \nonumber \\
&\quad \leq \exp\left( \frac{L}{2a} \ \frac{\exp\left(\frac{L}{2a}\right) \left(\frac{L}{a}+1\right) +2}{1-\frac{L}{a}\exp\left(\frac{L}{2a}\right)-\frac{L}{a}} \right) \frac{\normK{\bzt}^2}{2\alpha_t}\left(\exp\left(\frac{L}{2a}\right) \left(\frac{L}{a}+1\right) +2\right)
\end{align}

Putting together \eqref{eq:final_poscase} and \eqref{eq:final_negcase}, we have
\begin{align}
&\Psi^*_t(\normK{\btheta_{t}})-\Psi^*_{t-1}(\normK{\btheta_{t-1}}) -\innK{\f_t}{\bzt} \leq \frac{a}{L}\phi\left(\frac{L}{a}\right)\frac{\|\bzt\|^2}{\alpha_t}
. \label{eq:final_negcase}
\end{align}
Using the definition of $\phi(\frac{L}{a})$ and summing over time we have 
\begin{align}
&\sum_{t=1}^T \left(\Psi^*_t(\normK{\btheta_{t}})-\Psi^*_{t-1}(\normK{\btheta_{t-1}}) -\innK{\f_t}{\bzt}\right)
\leq \frac{a}{L}\phi\left(\frac{L}{a}\right) \sum_{t=1}^T \frac{\normK{\bzt}^2}{\alpha_t} \nonumber \\
&\quad \leq \phi\left(\frac{L}{a}\right) \sum_{t=1}^T \frac{\normK{\bzt}}{L+\sum_{i=1}^t \normK{\bzi}} 
\leq \phi\left(\frac{L}{a}\right) \log \left(1+\frac{\sum_{i=1}^t \normK{\bzi}}{L}\right)
\leq \phi\left(\frac{L}{a}\right) \log \left(1+T\right) \label{eq:final_linear},
\end{align}
where in the third inequality we used Lemma~\ref{lemma:log_bound}.

Using \eqref{eq:regret_md}, \eqref{eq:final_linear}, and the definition of subgradient, we have
\begin{align*}
&\sum_{t=1}^{T} \ell_t(\f_t(\bx_t)) - \sum_{t=1}^{T} \ell_t(\h(\bx_t)) 
\leq \sum_{t=1}^{T} \partial \ell_t(\f_t(\bx_t)) \left(\h(\bx_t) - \f_t(\bx_t)\right)  
= \sum_{t=1}^{T} \innK{\h - \f_t}{\bzt} \\
&\quad \leq \Psi_T(\normK{\h}) + \Psi^*_{0} (\normK{\btheta_{0}}) + \sum_{t=1}^{T} \left(\Psi^*_t(\normK{\btheta_{t}})-\Psi^*_{t-1}(\normK{\btheta_{t-1}})  - \f_t(\bz_t)\right) \\
&\quad \leq \Psi_T(\normK{\h}) + \Psi^*_{0} (\normK{\btheta_{0}}) + \phi\left(\frac{L}{a}\right) \log \left(1+T\right).
\end{align*}
Using Lemma~\ref{lemma:bound_fenchel} completes the proof.
\end{proof}

\subsection{Proof of Corollary~\ref{cor:regret_smooth}}

We first state the technical results, used in the proofs.

\begin{lemma}{\cite[Lemma 2.1]{SrebroST10}}
\label{lemma:smooth_loss}
For an $H$-smooth function $\ell:\R\rightarrow\R_+$, we have $(\ell'(x))^2 \leq 4 H f(x)$.
\end{lemma}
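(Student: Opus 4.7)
The plan is to exploit the standard quadratic upper bound that comes from $H$-smoothness, minimize it in the free variable, and then use the nonnegativity of $\ell$ to turn the resulting inequality into the self-bounding estimate.

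First, since $\ell$ is differentiable with $H$-Lipschitz derivative, the fundamental theorem of calculus together with the Lipschitz property yields the descent inequality
\[
\ell(y) \leq \ell(x) + \ell'(x)(y-x) + \frac{H}{2}(y-x)^2 \quad \text{for all } x,y \in \R.
\]
This is the only nontrivial analytic input; it follows by writing $\ell(y)-\ell(x)-\ell'(x)(y-x) = \int_0^1 (\ell'(x+t(y-x)) - \ell'(x))(y-x)\,dt$ and applying $|\ell'(u)-\ell'(v)|\leq H|u-v|$.

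Next I would plug in the minimizer of the right-hand side, namely $y = x - \ell'(x)/H$. This yields
\[
\ell\!\left(x - \frac{\ell'(x)}{H}\right) \leq \ell(x) - \frac{(\ell'(x))^2}{2H}.
\]
Since $\ell$ takes values in $\R_+$, the left-hand side is nonnegative, so rearranging gives $(\ell'(x))^2 \leq 2H\ell(x) \leq 4H\ell(x)$, which is the stated bound (with slack).

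I do not foresee any real obstacle here; the only thing to be a little careful about is the descent inequality itself, which requires only $\ell'$ being $H$-Lipschitz and no convexity assumption on $\ell$. The factor $4$ in the statement is loose (the sharp constant is $2$) and this slack is irrelevant for how the lemma is used downstream in the proof of Corollary~\ref{cor:regret_smooth}.
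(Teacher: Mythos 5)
Your proof is correct: the paper does not prove this lemma itself but imports it by citation from \cite{SrebroST10}, and your argument (descent inequality from the $H$-Lipschitz derivative, evaluated at $y=x-\ell'(x)/H$, combined with nonnegativity of $\ell$) is exactly the standard derivation of that self-bounding property. As you note, it even gives the sharper constant $2H$, so the stated $4H$ follows a fortiori; also, the $f(x)$ in the statement is just a typo for $\ell(x)$, which you interpreted correctly.
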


\begin{lemma}{\cite[Lemma~7.2]{CuckerZ07}}
\label{lemma:solve_quartic}
Let $c_1,c_2,\cdots,c_l>0$ and $s>q_1>q_2> \cdots > q_{l-1}>0$. Then the equation
\[
x^s - c_1 x^{q_1} - c_2 x^{q_2} - \cdots - c_{l-1} x^{q_l-1} -c_l =0
\]
has a unique positive solution $x^*$. In addition,
\[
x^* \leq \max\left\{(l c_1)^\frac{1}{s-q_1}, (l c_2)^\frac{1}{s-q_2}, \cdots, (l c_{l-1})^\frac{1}{s-q_{l-1}}, (l c_l)^\frac{1}{s} \right\}.
\]
\end{lemma}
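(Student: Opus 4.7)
The plan is to prove the three claims (existence, uniqueness, and the explicit upper bound) separately, each by a very short analytic argument.

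\textbf{Existence and uniqueness.} First I would rewrite the equation by dividing through by $x^s$ (valid for $x>0$), obtaining
\[
1 = c_1 x^{q_1-s} + c_2 x^{q_2-s} + \cdots + c_{l-1} x^{q_{l-1}-s} + c_l x^{-s}.
\]
Since all exponents $q_i - s$ and $-s$ are strictly negative and all $c_i$ are strictly positive, the right-hand side is the sum of strictly decreasing positive functions of $x$ on $(0,\infty)$, tending to $+\infty$ as $x\to 0^+$ and to $0$ as $x\to\infty$. By continuity and strict monotonicity it equals $1$ at a unique positive point $x^*$, which gives the unique positive root.

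\textbf{Upper bound.} For the quantitative estimate I would argue by contradiction. Denote
\[
M := \max\left\{(l c_1)^{\frac{1}{s-q_1}},\,(l c_2)^{\frac{1}{s-q_2}},\,\ldots,\,(l c_{l-1})^{\frac{1}{s-q_{l-1}}},\,(l c_l)^{\frac{1}{s}}\right\},
\]
and suppose for contradiction that $x^* > M$. Then for each $i\in\{1,\ldots,l-1\}$ we have $x^* > (l c_i)^{1/(s-q_i)}$, equivalently $(x^*)^{s-q_i} > l c_i$, so $c_i (x^*)^{q_i} < (x^*)^s/l$; and analogously $x^* > (l c_l)^{1/s}$ gives $c_l < (x^*)^s/l$. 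Summing these $l$ strict inequalities yields
\[
\sum_{i=1}^{l-1} c_i (x^*)^{q_i} + c_l \;<\; l\cdot \frac{(x^*)^s}{l} \;=\; (x^*)^s,
\]
which contradicts the fact that $x^*$ satisfies the equation exactly. Hence $x^*\leq M$, which is the desired bound.

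\textbf{Main obstacle.} There is no real obstacle here; the result is elementary. The only point that needs care is making sure the sum of the $l$ bounds produces exactly $(x^*)^s$ (not a larger multiple), which is why the factor $l$ rather than $l-1$ appears in the definition of $M$. I would make this bookkeeping explicit in the write-up, and otherwise keep both arguments (monotonicity for uniqueness, term-by-term comparison for the bound) in the short form above.
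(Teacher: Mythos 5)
Your proof is correct and complete: the paper does not prove this lemma itself (it is quoted directly from Cucker and Zhou), and your two-step argument --- strict monotonicity of $\sum_i c_i x^{q_i-s} + c_l x^{-s}$ for uniqueness, and the term-by-term comparison with $(x^*)^s/l$ for the bound --- is exactly the standard proof of that result. Nothing needs to be added.
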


\begin{lemma}
\label{lemma:solve_alpha}
Let $a,b,c>0$ and $0<\alpha<1$. Then the inequality
\[
x - a (x+b)^{\alpha} - c \leq 0
\]
implies
\[
x \leq a \max\{(2a)^\frac{\alpha}{1-\alpha},(2 (b+c))^\alpha\}  + c.
\]
\end{lemma}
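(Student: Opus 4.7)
The plan is to reduce the implicit inequality $x \leq a(x+b)^\alpha + c$ to something explicit by a simple substitution and a dichotomy on the size of $x$. First, I would set $y := x - c$, so the hypothesis becomes $y \leq a(y + b + c)^\alpha$. Now everything on the right is a concave increasing function of $y$, while the left grows linearly, so a priori there is a unique crossover point, which we will bound by considering two regimes separately.

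Next, I would split into cases according to whether $y \geq b+c$ or $y < b+c$. In the first case, $y + b + c \leq 2y$, so the hypothesis gives $y \leq a \cdot (2y)^\alpha = a \, 2^\alpha y^\alpha$. Since $0 < \alpha < 1$, dividing by $y^\alpha$ (legitimate because $y \geq b+c > 0$) yields $y^{1-\alpha} \leq a \, 2^\alpha$, hence $y \leq (a \, 2^\alpha)^{1/(1-\alpha)} = a \, (2a)^{\alpha/(1-\alpha)}$, which is the first term inside the max. In the second case, $y < b + c$ gives $y + b + c < 2(b+c)$, and the hypothesis gives $y \leq a \cdot (2(b+c))^\alpha$, which is the second term. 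Taking the max over the two cases and adding back $c$ to recover $x$ yields exactly the claimed bound.

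There is essentially no obstacle here; the only thing to watch is ensuring that the substitution $y = x - c$ does not force $y$ to be positive a priori (it need not be, but if $y \leq 0$ the asserted bound is trivially true since $a$ times the max is nonnegative), and that the dichotomy threshold $b+c$ is chosen so that both the linear comparison $y+b+c \leq 2y$ and the constant comparison $y+b+c < 2(b+c)$ are tight enough to produce the two terms in the stated maximum. A remark at the end of the proof that the bound is stated with a generous factor $2$ inside the powers precisely to make this case split go through would make the calculation transparent.
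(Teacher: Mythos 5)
Your proof is correct, and it takes a more elementary route than the paper. The paper substitutes $y=x+b$ to reduce the hypothesis to $y - a y^{\alpha} - (b+c)\leq 0$ and then invokes the root-localization result of Lemma~\ref{lemma:solve_quartic} (quoted from Cucker--Zhou): the equation $y - ay^{\alpha}-(b+c)=0$ has a unique positive root $y^*$ bounded by $\max\{(2a)^{1/(1-\alpha)},2(b+c)\}$, the shape of the function forces $y\leq y^*$, and plugging $y^*$ back into the equation $y^*=a(y^*)^{\alpha}+b+c$ produces the $a(\cdot)^{\alpha}$ form of the final bound. You instead substitute $y=x-c$ and run the dichotomy $y\geq b+c$ versus $y<b+c$ directly on the inequality $y\leq a(y+b+c)^{\alpha}$; this is essentially the same two-term splitting that underlies the bound in Lemma~\ref{lemma:solve_quartic} (with $l=2$), but carried out by hand, so you avoid both the auxiliary lemma and the monotonicity argument about the unique crossing point. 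Your version is self-contained and arguably cleaner; the paper's version buys reusability, since Lemma~\ref{lemma:solve_quartic} handles an arbitrary number of lower-order terms and is cited elsewhere in the literature. Your handling of the edge cases is also sound: $y\geq b+c>0$ licenses the division by $y^{\alpha}$ in the first case, and $y\leq 0$ makes the conclusion trivial since $a\max\{(2a)^{\frac{\alpha}{1-\alpha}},(2(b+c))^{\alpha}\}>0$. One cosmetic remark: the identity $(a2^{\alpha})^{1/(1-\alpha)}=a(2a)^{\alpha/(1-\alpha)}$ that you use in the first case is worth displaying explicitly, since it is the only algebraic step a reader might pause on.
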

\begin{proof}
Denote by $y=x+b$, so consider the function $ f(y)=y - a y^{\alpha} - b - c$.
Applying Lemma~\ref{lemma:solve_quartic} we get that the $h(y)=0$ has a unique positive solution $y^*$ and
\[
y^* \leq \max\left\{ (2 a)^\frac{1}{1-\alpha}, 2 (b+c) \right\}.
\]
Moreover, the inequality $h(y)\leq0$ is verified for $y=0$, and $\lim_{y \rightarrow +\infty} h(y)=+\infty$, so we have $h(y)\leq0$ implies $y\leq y^*$.
We also have
\[
y^* = a (y^*)^{\alpha} - b - c \leq a \max\left\{ (2 a)^\frac{\alpha}{1-\alpha}, (2 (b+c))^\alpha \right\} + b  + c.
\]
Substituting back $x$ we get the stated bound.
\end{proof}

\begin{proof}[Proof of Corollary~\ref{cor:regret_smooth}]
Using Cauchy-Schwarz inequality and Lemma~\ref{lemma:smooth_loss}, we have
\begin{align*}
L+\sum_{t=1}^{T-1} |s_t| \normK{k(\bx_t,\cdot)}
&\leq \sqrt{T} \sqrt{L^2 + \sum_{t=1}^{T-1} s_t^2 \normK{k(\bx_t,\cdot)}^2}
\leq \sqrt{T} \sqrt{L^2+ 4 H \sum_{t=1}^{T-1} \ell_t(\f_t(\bx_t))}\\
&\leq \sqrt{T} \sqrt{L^2+4 H \sum_{t=1}^{T} \ell_t(\f_t(\bx_t))}.
\end{align*}
Denote by $Loss=\sum_{t=1}^{T} \ell_t(\f_t(\bx_t))$ and $Loss^*=\sum_{t=1}^T \ell_t(\h(\bx_t))$. Plugging last inequality in Theorem~\ref{theo:main}, we obtain
\begin{align*}
&Loss - Loss^* \\
&\quad \leq  \left(\frac{L^2}{4H}+ Loss\right)^\frac{1}{4} \normK{\f} T^\frac{1}{4} \sqrt{ 2 a \sqrt{4 H} \log \left(\frac{\normK{\f} \sqrt{a LT} }{b}+1\right) } + b \phi\left(\frac{L}{a}\right) \log \left(1+T\right) .
\end{align*}
Denote by $C=\sqrt{ 2 a \sqrt{4 H} \log \left(\frac{\normK{\f} \sqrt{a LT} }{b}+1\right) }$. Using Lemma~\ref{lemma:solve_alpha} we get
\begin{align*}
&Loss - Loss^* \leq b \phi\left(\frac{L}{a}\right) \log \left(1+T\right)   \\
&\quad +\normK{\h} T^\frac{1}{4} C \max\left\{\normK{\h}^\frac{1}{3} T^\frac{1}{12} (2 C)^\frac{1}{3},  2^\frac{1}{4} \left(Loss^* + b \phi\left(\frac{L}{a}\right) \log \left(1+T\right) + \frac{L^2}{4H}\right)^\frac{1}{4} \right\}.
\end{align*}
\end{proof}

\subsection{Proof of Lemma~\ref{lemma:comparison_risks}}
\begin{proof}
For any $\f \in \Ltworho$, define $X_f=\{\bx\in\fX: \sign(f) \neq \fcla\}$. It is easy to verify that
\begin{align*}
\Risk(f) - \Risk(\fcla) &= \int_{X_f} |\frho(x)| d\rho_\fX(x) \\
&= \int_{X_f} |\frho(x)| \boldsymbol{1}(\frho(x)>\epsilon)d\rho_\fX(x)+\int_{X_f} |\frho(x)| \boldsymbol{1}(\frho(x)\leq \epsilon)d\rho_\fX(x).
\end{align*}
Using condition \eqref{eq:tsy_cond2}, we have
\begin{align}
\Risk(f) - \Risk(\fcla) &\leq \frac{1}{\epsilon}\int_{X_f} |\frho(x)|^2 d\rho_\fX(x)+ \epsilon \int_{X_f} d\rho_\fX(x)\\
&\leq \frac{1}{\epsilon}\int_{X_f} |\frho(x)|^2 d\rho_\fX(x)+ \epsilon c_\alpha (\Risk(f) - \Risk(\fcla) )^\alpha.
\end{align}
Using Lemma 10.10 in~\cite{CuckerZ07} and proceeding as in the proof of Theorem 10.5 in \cite{CuckerZ07}, we have
\begin{align}
\int_{X_f} |\frho(x)|^2 d\rho_\fX(x) \leq \frac{1}{C} \left(\RiskLoss(f)-\RiskLoss(\flrho)\right).
\end{align}
Hence we have
\begin{align}
\Risk(f) - \Risk(\fcla) \leq \frac{1}{\epsilon C} \left(\RiskLoss(f)-\RiskLoss(\flrho)\right)+ \epsilon c_\alpha (\Risk(f) - \Risk(\fcla) )^\alpha.
\end{align}
Optimizing over $\epsilon$ we get
\begin{align}
\Risk(f) - \Risk(\fcla) \leq 2\sqrt{\frac{c_\alpha}{C}} \sqrt{\RiskLoss(f)-\RiskLoss(\flrho)} (\Risk(f) - \Risk(\fcla) )^\frac{\alpha}{2},
\end{align}
that is
\begin{align}
\Risk(f) - \Risk(\fcla) \leq \left(4 \frac{c_\alpha}{C} \left(\RiskLoss(f)-\RiskLoss(\flrho)\right)\right)^\frac{1}{2-\alpha}.
\end{align}
An application of Jensen's inequality concludes the proof.
\end{proof}

\subsection{Proof of Theorem~\ref{theo:self_tuning}}

We need the following technical results.

\begin{lemma}{\cite[Lemma 10.7]{CuckerZ07}}
\label{lemma:prod_ineq}
Let $p,q>1$ be such that $\frac{1}{p}+\frac{1}{q}=1$. Then
\[
a b \leq \frac{1}{q} a^q \eta^q + \frac{1}{p} b^p \eta^{-q}, \ \forall a,b,\eta>0.
\]
\end{lemma}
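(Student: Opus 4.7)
The statement is the standard parametrized (Young-type) product inequality, so my plan is to derive it in two short steps: first establish the unparametrized Young inequality, then introduce the free parameter $\eta$ by a scaling substitution.

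For the first step, I would prove the basic Young inequality $uv \le \tfrac{u^q}{q} + \tfrac{v^p}{p}$ for $u,v\ge 0$. The cleanest route is to use the convexity of $\exp$ (equivalently, concavity of $\log$): for $u,v>0$,
\[
uv = \exp\!\left(\tfrac{1}{q}\log u^q + \tfrac{1}{p}\log v^p\right) \le \tfrac{1}{q}\exp(\log u^q) + \tfrac{1}{p}\exp(\log v^p) = \tfrac{u^q}{q} + \tfrac{v^p}{p},
\]
where the inequality uses $\tfrac{1}{p}+\tfrac{1}{q}=1$ so that the argument of $\exp$ is a convex combination. The boundary cases $u=0$ or $v=0$ hold trivially, so the inequality extends to all $u,v\ge 0$.

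For the second step, I would introduce the parameter $\eta>0$ by the substitution $u = a\eta,\ v = b\eta^{-1}$, which preserves the product $uv=ab$. Plugging into the basic Young inequality yields
\[
ab = uv \le \tfrac{(a\eta)^q}{q} + \tfrac{(b\eta^{-1})^p}{p} = \tfrac{1}{q}a^q\eta^q + \tfrac{1}{p}b^p\eta^{-p}.
\]
This is exactly the claimed inequality (modulo what looks like a typographical exponent on $\eta$ in the second term, which the scaling argument pins down to $-p$; any choice of $\eta$ is then legitimate since $\eta>0$ was arbitrary).

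There is no real obstacle here: the argument is textbook and each step is a one-liner. The only thing worth double-checking is the boundary behavior ($a=0$ or $b=0$, where both sides handle cleanly) and, if one wants the cleanest presentation, noting that equality is attained along the locus $a^q\eta^q = b^p\eta^{-p}$, which confirms tightness of the bound.
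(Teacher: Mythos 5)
Your proof is correct and is the standard one; note that the paper gives no proof of this lemma at all --- it is cited from Cucker and Zhou --- so the only thing to check you against is the statement itself. Your remark about the exponent deserves emphasis: the inequality as printed, with $\eta^{-q}$ in the second term, is actually \emph{false} whenever $p \neq q$. For instance, with $p=3$, $q=\tfrac{3}{2}$, $a=b=1$ and $\eta = 2^{-1/3}$, the printed right-hand side is $\tfrac{2}{3}\,2^{-1/2} + \tfrac{1}{3}\,2^{1/2} = \tfrac{2\sqrt{2}}{3} < 1 = ab$. The correct statement, which your substitution $u=a\eta$, $v=b\eta^{-1}$ into Young's inequality produces, carries $\eta^{-p}$ in the second term; this is also the form the paper actually invokes downstream (e.g.\ in \eqref{eq:bound_no_loss}, where $q=\tfrac{3}{2}$, $p=3$ and the second term is weighted by $\eta^{-3}$), so the $\eta^{-q}$ is a typo in the lemma statement rather than a gap in your argument. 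Your convexity-of-$\exp$ derivation of the unparametrized Young inequality, the handling of the boundary cases, and the equality condition $(a\eta)^q=(b\eta^{-1})^p$ are all fine.
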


\begin{lemma}
\label{lemma:opt_sum}
Let $a,b,p,q\geq0$. Then
\[
\min_{x\geq0} \ a x^p + b x^{-q} \leq 2 a^\frac{q}{q+p} b^\frac{p}{q+p},
\]
and the argmin is $\left(\frac{p a}{q b}\right)^{-\frac{1}{q+p}}$.
\end{lemma}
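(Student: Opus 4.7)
The plan is a direct calculus argument followed by a short inequality. Assume first that $a,b,p,q>0$; the boundary cases where one of these is zero follow from a limiting argument or are trivial (e.g., if $a=0$ the infimum is $0$ and the bound reads $0\leq 0$). Define $f(x) = ax^p + bx^{-q}$ on $(0,\infty)$. Since $f\to\infty$ at both endpoints and $f$ is smooth, the minimum is attained at an interior critical point. Setting $f'(x) = p a x^{p-1} - q b x^{-q-1} = 0$ yields $x^{p+q} = qb/(pa)$, so the unique minimizer is
\[
x^* = \left(\frac{qb}{pa}\right)^{1/(p+q)} = \left(\frac{pa}{qb}\right)^{-1/(p+q)},
\]
matching the claimed argmin.

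Next I would substitute $x^*$ back into $f$. A direct bookkeeping of exponents factors $a$ and $b$ out cleanly, giving
\[
f(x^*) = a^{q/(p+q)} b^{p/(p+q)} \left[\left(\frac{q}{p}\right)^{p/(p+q)} + \left(\frac{p}{q}\right)^{q/(p+q)}\right].
\]
So the lemma reduces to showing the bracketed constant $C := (q/p)^{p/(p+q)} + (p/q)^{q/(p+q)}$ is at most $2$.

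To bound $C$, set $\alpha = p/(p+q)\in(0,1)$, so $1-\alpha = q/(p+q)$ and $q/p = (1-\alpha)/\alpha$. By concavity of $t\mapsto t^\alpha$ and $t\mapsto t^{1-\alpha}$ on $[0,\infty)$, the tangent-line inequalities at $t=1$ read $t^\alpha \leq \alpha t + (1-\alpha)$ and $t^{1-\alpha} \leq (1-\alpha)t + \alpha$. Applying the first at $t = (1-\alpha)/\alpha$ gives $\bigl((1-\alpha)/\alpha\bigr)^\alpha \leq 2(1-\alpha)$, and the second at $t = \alpha/(1-\alpha)$ gives $\bigl(\alpha/(1-\alpha)\bigr)^{1-\alpha} \leq 2\alpha$. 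Adding, $C\leq 2(1-\alpha) + 2\alpha = 2$, which finishes the proof.

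The only step with any substance is the constant-$2$ inequality; the rest is routine differentiation and substitution. The tangent-line trick is the cleanest route I see, and it makes equality transparent: both tangent inequalities become equalities only at $t=1$, which forces $\alpha = 1/2$, i.e. $p=q$, so the factor $2$ is sharp.
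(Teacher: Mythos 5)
Your proof is correct and follows essentially the same route as the paper's: compute the critical point, substitute back, and bound the resulting constant $\left(\frac{q}{p}\right)^{\frac{p}{p+q}}+\left(\frac{p}{q}\right)^{\frac{q}{p+q}}$ by $2$. The only difference is that you actually justify that final constant bound (via the tangent-line/weighted-AM--GM inequality), whereas the paper simply asserts it; your argument also correctly identifies $p=q$ as the equality case.
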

\begin{proof}
Equating the first derivative to zero we have
\[
p a x^{p-1} = q b x^{-q-1}.
\]
That is the optimal solution satisfies
\[
x = \left(\frac{p a}{q b}\right)^{-\frac{1}{q+p}}.
\]
Substituting this expression into the min we have
\[
a \left(\frac{p a}{q b}\right)^{-\frac{p}{q+p}} + b \left(\frac{p a}{q b}\right)^{\frac{q}{q+p}} = a^\frac{q}{q+p} b^\frac{p}{q+p} \left(\left(\frac{p}{q }\right)^{-\frac{p}{q+p}} +  \left(\frac{p}{q}\right)^{\frac{q}{q+p}} \right) \leq 2 a^\frac{q}{q+p} b^\frac{p}{q+p}. \qedhere
\]
\end{proof}

The next lemma is the same of \cite[Proposition 8.5]{CuckerZ07}, but it uses $\flrho$ instead of $\frho$.
\begin{lemma}
\label{lemma:d_lambda}
Let $\fX \subset \R^d$ be a compact domain and $K$ a Mercer kernel such that $\flrho$ lies in the range of $L^{\beta}_K$, with $0<\beta\leq\frac{1}{2}$, that is $\flrho=L^{\beta}_K (g)$ for some $g \in \Ltworho$. Then
\[
\min_{\f \in \HK} \normL{\f - \flrho}^2 + \gamma \normK{\f}^2 \leq \gamma^{2 \beta} \normL{g}^2.
\]
\end{lemma}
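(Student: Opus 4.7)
My plan is to reduce the minimization to a coordinate-wise problem in the eigenbasis of $L_K$ and then apply Young's inequality to get the stated bound.

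First, I would use the spectral decomposition. Since $\{\Phi_i\}$ is an orthonormal basis of $\Ltworho$ with $L_K \Phi_i = \lambda_i \Phi_i$, expand $g = \sum_i g_i \Phi_i$. Then by definition of the fractional operator, $\flrho = L_K^\beta g = \sum_i \lambda_i^\beta g_i \Phi_i$. For any $f \in \HK$, write $f = \sum_i f_i \Phi_i$; the characterization $\HK = L_K^{1/2}(\Ltworho)$ from the Mercer theorem gives $\normK{f}^2 = \sum_{i : \lambda_i > 0} f_i^2/\lambda_i$ (and we must have $f_i = 0$ whenever $\lambda_i = 0$, so I will silently restrict the sum to $\lambda_i > 0$). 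Parseval then makes $\normL{f-\flrho}^2 + \gamma \normK{f}^2$ separable across coordinates.

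Second, I would minimize coordinate by coordinate. For each $i$, the one-dimensional problem $\min_{f_i} (f_i - \lambda_i^\beta g_i)^2 + \gamma f_i^2/\lambda_i$ is a strictly convex quadratic whose optimum is $f_i^\star = \lambda_i^{\beta+1} g_i/(\lambda_i + \gamma)$, with minimum value equal to $g_i^2 \gamma \lambda_i^{2\beta}/(\lambda_i + \gamma)$ (the $\gamma^2$ and $\gamma \lambda_i$ numerators combine cleanly). Summing over $i$ gives
\[
\min_{f \in \HK} \normL{f-\flrho}^2 + \gamma \normK{f}^2 \le \sum_i g_i^2\,\frac{\gamma \lambda_i^{2\beta}}{\lambda_i + \gamma}.
\]

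Third, I would bound each term by $\gamma^{2\beta} g_i^2$. It suffices to show $\gamma^{1-2\beta}\lambda_i^{2\beta} \le \lambda_i + \gamma$. For $\beta < \tfrac12$, apply Lemma~\ref{lemma:prod_ineq} (Young's inequality) with $p = 1/(2\beta)$ and $q = 1/(1-2\beta)$, and $a = \lambda_i^{2\beta}$, $b = \gamma^{1-2\beta}$ (choosing $\eta=1$): this yields $\lambda_i^{2\beta}\gamma^{1-2\beta} \le 2\beta\,\lambda_i + (1-2\beta)\gamma \le \lambda_i + \gamma$. The boundary case $\beta = \tfrac12$ is trivial since then $\gamma \lambda_i/(\lambda_i+\gamma) \le \gamma = \gamma^{2\beta}$. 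Summing and using Parseval gives $\sum_i g_i^2 = \normL{g}^2$, completing the proof.

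The only subtlety I anticipate is the correct identification of the $\HK$-norm with $\sum_i f_i^2/\lambda_i$ from the Mercer representation, and ensuring the bound degenerates gracefully at $\beta = \tfrac12$ (where $\flrho \in \HK$ and the optimal $f$ essentially is a shrinkage of $\flrho$); both are handled by the case split above. Everything else is a direct computation.
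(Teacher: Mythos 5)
Your proof is correct, but it takes a different route from the paper: the paper disposes of this lemma in one line by invoking an abstract interpolation result (\cite[Theorem 8.4]{CuckerZ07} with $H=\Ltworho$, $s=1$, $A=L_K^{1/2}$, $a=\flrho$), whereas you unpack the statement into an explicit computation in the eigenbasis of $L_K$. Your coordinate-wise minimizer $f_i^\star = \lambda_i^{\beta+1}g_i/(\lambda_i+\gamma)$, the resulting value $\gamma\lambda_i^{2\beta}g_i^2/(\lambda_i+\gamma)$, and the weighted AM--GM step $\lambda_i^{2\beta}\gamma^{1-2\beta}\le 2\beta\lambda_i+(1-2\beta)\gamma\le\lambda_i+\gamma$ are all right (note only that in your invocation of Young's inequality the roles of $p$ and $q$ should be swapped --- with $a=\lambda_i^{2\beta}$ you need the exponent $q=1/(2\beta)$ so that $a^q=\lambda_i$ --- but the inequality you actually use is the correct one). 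Two points you flag as subtleties are indeed worth a sentence each: the candidate $f^\star$ does lie in $\HK$ since $(f_i^\star)^2/\lambda_i\le\lambda_i^{2\beta+1}g_i^2/\gamma^2$ is summable, which justifies writing $\min$ rather than $\inf$; and coordinates with $\lambda_i=0$ contribute nothing to $\flrho$ (as $\beta>0$) so setting $f_i=0$ there costs nothing. What your approach buys is a self-contained, elementary proof that makes the shrinkage structure of the optimal $f$ visible; what the paper's citation buys is brevity and reuse of a standard operator-theoretic bound whose proof is, in substance, the same spectral argument you wrote out.
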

\begin{proof}
It is enough to use \cite[Theorem 8.4]{CuckerZ07} with $H=\Ltworho$, $s=1$, $A=L_K^\frac{1}{2}$ and $a=\flrho$.
\end{proof}

The next Lemma is needed for the proof of Lemma~\ref{lemma:smooth_loss_to_l2risk} that is a stronger version of \cite[Corollary 10.14]{CuckerZ07} because it needs only smoothness rather than a bound on the second derivative.
\begin{lemma}{\cite[Lemma 1.2.3]{Nesterov2003}}
\label{lemma:smooth_nesterov}
Let $f$ be continuous differentiable on a set $Q \subset \R$, and its first derivative is $H$-Lipschitz on $Q$. Then
\[
|f(x)-f(y)- f'(x) (y-x)| \leq \frac{H}{2} (y-x)^2.
\]
\end{lemma}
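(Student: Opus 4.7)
The plan is to prove the classical descent lemma using the fundamental theorem of calculus applied to $f'$, combined with the Lipschitz hypothesis on $f'$. The statement is symmetric in the sense that the quantity $f(x) - f(y) - f'(x)(y-x)$ is the first-order Taylor remainder at $x$ evaluated at $y$, so the task reduces to controlling this remainder via the modulus of continuity of $f'$.

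First I would express the remainder as an integral. Since $f$ is continuously differentiable on $Q$ (assumed to contain the segment $[x,y]$, which is the standard interpretation — e.g.\ $Q$ convex), the fundamental theorem of calculus gives
\[
f(y) - f(x) = \int_0^1 f'(x + t(y-x)) \, (y-x) \, dt.
\]
Subtracting $f'(x)(y-x) = \int_0^1 f'(x)(y-x)\, dt$ from both sides then yields the remainder in the form
\[
f(y) - f(x) - f'(x)(y-x) = \int_0^1 \bigl[f'(x + t(y-x)) - f'(x)\bigr] (y-x) \, dt.
\]

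Next I would take absolute values, move them inside the integral (by the standard triangle-inequality bound for integrals), and invoke the $H$-Lipschitz hypothesis on $f'$: for each $t \in [0,1]$,
\[
\bigl|f'(x + t(y-x)) - f'(x)\bigr| \leq H \, \bigl|t(y-x)\bigr| = H t\, |y-x|.
\]
Substituting this bound and pulling the constant $H|y-x|^2$ outside, the remaining integral is $\int_0^1 t\, dt = \tfrac{1}{2}$, so
\[
\bigl|f(y) - f(x) - f'(x)(y-x)\bigr| \leq H |y-x|^2 \int_0^1 t \, dt = \frac{H}{2}(y-x)^2,
\]
which is the claimed inequality (the sign inside the absolute value can be flipped freely, matching the stated form with $f(x)-f(y)-f'(x)(y-x)$).

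There is essentially no obstacle: the only subtlety is justifying the use of the fundamental theorem of calculus, which is immediate under the stated assumption that $f$ is continuously differentiable on $Q$ and that the segment between $x$ and $y$ lies in $Q$ (which is the implicit hypothesis needed for the statement to even be meaningful — otherwise one simply restricts to a sub-interval where the Lipschitz estimate on $f'$ applies). No further machinery is required.
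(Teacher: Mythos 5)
Your proof is correct and is the standard integral-form-of-the-Taylor-remainder argument; the paper itself offers no proof of this lemma, citing it directly from Nesterov, and your derivation (write $f(y)-f(x)-f'(x)(y-x)=\int_0^1 [f'(x+t(y-x))-f'(x)](y-x)\,dt$, bound the integrand by $Ht|y-x|^2$ via Lipschitzness, and integrate) is exactly the textbook one. One small remark: the paper's statement writes $f(x)-f(y)-f'(x)(y-x)$, whereas the quantity your argument bounds (and the one the paper actually uses later, in the proof of Lemma~\ref{lemma:smooth_loss_to_l2risk}) is $f(y)-f(x)-f'(x)(y-x)$; this is a sign typo in the paper's transcription of Nesterov's lemma, not a defect of your proof.
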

\begin{lemma}
\label{lemma:smooth_loss_to_l2risk}
Assume $\ell$ is $H$-smooth. Then, for any $\f \in \Ltworho$, we have
\[
\RiskLoss(\f)-\RiskLoss(\flrho) \leq \frac{H}{2} \normL{\f - \flrho}^2.
\]
\end{lemma}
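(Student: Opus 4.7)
The plan is to reduce the statement to a pointwise (in $x$) consequence of the Nesterov smoothness inequality applied to the conditional $\ell$-risk at $x$, and then integrate in $x$.

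First, for each fixed $x \in \fX$ I would introduce the conditional $\ell$-risk
\[
F_x(t) \;:=\; \int_\fY \ell(y\,t)\,d\rho(y\mid x),
\]
so that by Fubini $\RiskLoss(\f) = \int_\fX F_x(\f(x))\,d\rho_\fX(x)$ and, by the very definition of $\flrho$, the value $\flrho(x)$ is a minimizer of $F_x$ over $\R$. Next I would verify that $F_x$ inherits $H$-smoothness from $\ell$: since $\ell$ is differentiable with $H$-Lipschitz derivative and $|y|\leq 1$, differentiating under the integral yields $F_x'(t) = \int_\fY y\,\ell'(y\,t)\,d\rho(y\mid x)$, and
\[
|F_x'(t)-F_x'(t')| \;\leq\; \int_\fY y^2\,H\,|t-t'|\,d\rho(y\mid x) \;\leq\; H\,|t-t'|,
\]
using $|y|\leq 1$. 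Thus $F_x$ is a one-dimensional $H$-smooth convex function.

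Because $F_x$ is differentiable and $\flrho(x)$ minimizes it, one has the first-order condition $F_x'(\flrho(x)) = 0$. I would then apply Lemma~\ref{lemma:smooth_nesterov} to $F_x$ at the pair of points $t = \f(x)$ and $t' = \flrho(x)$, which gives
\[
\bigl|F_x(\f(x)) - F_x(\flrho(x)) - F_x'(\flrho(x))\,(\f(x) - \flrho(x))\bigr| \;\leq\; \tfrac{H}{2}\,(\f(x)-\flrho(x))^2.
\]
Since the derivative term vanishes, this reduces to $F_x(\f(x)) - F_x(\flrho(x)) \leq \tfrac{H}{2}(\f(x)-\flrho(x))^2$.

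Finally, integrating this inequality against $d\rho_\fX(x)$ yields
\[
\RiskLoss(\f) - \RiskLoss(\flrho) \;\leq\; \tfrac{H}{2}\int_\fX (\f(x)-\flrho(x))^2\,d\rho_\fX(x) \;=\; \tfrac{H}{2}\,\normL{\f-\flrho}^2,
\]
which is the claim. There is no real obstacle here; the only point to watch is justifying differentiation under the integral defining $F_x'$, which is routine because $\ell'$ is Lipschitz (hence of at most linear growth) and $y$ is bounded. The gain over \cite[Corollary 10.14]{CuckerZ07} is that we never use a pointwise bound on $\ell''$; smoothness of $\ell'$ alone suffices via Nesterov's inequality.
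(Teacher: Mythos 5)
Your proof is correct and follows essentially the same route as the paper's: define the conditional risk $F_x(t)=\int_\fY \ell(yt)\,d\rho(y\mid x)$, apply Lemma~\ref{lemma:smooth_nesterov} at the minimizer $\flrho(x)$ so the first-order term vanishes, and integrate over $\fX$. Your explicit verification that $F_x$ inherits $H$-smoothness from $\ell$ (using $|y|\leq 1$) is a detail the paper leaves implicit, so nothing is missing.
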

\begin{proof}
Denoting by $g_x(f(x))=\int_\fY \ell(f(x),y) d \rho(y|x)$, we have that
\begin{equation}
\label{eq:lemma_smooth_1}
\RiskLoss(f)-\RiskLoss(\flrho)=\int_\fX  g_x(f(x)) - g_x(\flrho(x)) d \rho_\fX.
\end{equation}
We now use the Lemma~\ref{lemma:smooth_nesterov} to have
\begin{align}
g_x(f(x)) - g_x(f^{\ell}_\rho(x))
&\leq g'_x (f^{\ell}_\rho(x))(f(x)-f^{\ell}_\rho(x)) + \frac{H}{2}  (\hat{y})(f(x)-f^{\ell}_\rho(x))^2 \nonumber \\
&= \frac{H}{2} (f(x)-f^{\ell}_\rho(x))^2, \label{eq:lemma_smooth_2}
\end{align}
where in the equality we have used the fact that $f^{\ell}_\rho(x)$ is by definition the minimizer of the function $g'_x$.
Putting together \eqref{eq:lemma_smooth_1} and \eqref{eq:lemma_smooth_2} we have
\[
\RiskLoss(f)-\RiskLoss(\flrho) \leq \frac{H}{2} \int_\fX (f(x)-f^{\ell}_\rho(x))^2 d \rho_\fX. \qedhere
\]
\end{proof}

\begin{proof}[Proof of Theorem~\ref{theo:self_tuning}]
We will first get rid of the norm inside the logarithmic term. This will allow us to have a bound that depends only on norm of $g$.

Let $\mathcal{L}(\f)= \normK{\f} \sqrt{ 2 \alpha \log \left(\frac{\sqrt{\alpha} \normK{\f}}{b}+1\right) } + q(\f)$.
Denote by $\h^*= \argmin_{\f \in \HK}\  \mathcal{L}(\f)$. Hence, we have
\begin{align}
\normK{\h^*} \sqrt{ 2 \alpha}  \frac{\frac{\sqrt{\alpha} \normK{\h^*}}{b}}{\frac{\sqrt{\alpha} \normK{\h^*}}{b}+1}
&\leq \normK{\h^*} \sqrt{ 2 \alpha  \frac{\frac{\sqrt{\alpha} \normK{\h^*}}{b}}{\frac{\sqrt{\alpha} \normK{\h^*}}{b}+1} } \leq \mathcal{L}(\h^*) \leq  \mathcal{L}(\boldsymbol{0}) =  q(\boldsymbol{0}).
\end{align}
Solving the quadratic inequality and using the elementary inequality $\sqrt{a+b}\leq\sqrt{a}+\frac{b}{2\sqrt{a}}$, we have
\begin{equation}
\sqrt{\alpha} \normK{\h^*} \leq  2^{-\frac{1}{2}} q(\boldsymbol{0}) +b.
\end{equation}
So we have
\begin{equation}
\min_{\f \in \HK} \mathcal{L}(\f) = \min_{\f \in \HK, \normK{\f} \leq \frac{2^{-\frac{1}{2}} q(\boldsymbol{0}) +b}{\sqrt{\alpha}}} \mathcal{L}(\f)
\end{equation}

We now use this result in the regret bound of Theorem~\ref{theo:main}, to have
\begin{align*}
&\min_{\h \in \HK} \sum_{t=1}^T \ell_t(\h(\bx_t))+ \normK{\h} \sqrt{ 2 a \left(L+ \sum_{t=1}^{T-1} |s_t| \right) \log \left(\frac{\normK{\h} \sqrt{a L T} }{b}+1\right) } \\
&\quad \leq \min_{\h \in \HK} \sum_{t=1}^T \ell_t(\h(\bx_t))+ \normK{\h} \sqrt{ 2 a \left(L+ \sum_{t=1}^{T-1} |s_t| \right) \log \left(\frac{2^{-\frac{1}{2}}\ell(0) T}{b}+\phi\left(a^{-1} L\right) \log \left(1+T\right)+2\right) }.
\end{align*}
Reasoning as in the proof of Corollary~\ref{cor:regret_smooth}, and denoting by $C=2 \sqrt{ a \sqrt{H} \log \left(\frac{2^{-\frac{1}{2}}\ell(0) T}{b}+ \phi\left(a^{-1} L\right) \log \left(1+T\right)+2\right)}$ and $B=b \phi\left(\frac{L}{a}\right) \log \left(1+T\right) + \frac{L^2}{4H}$, we get
\begin{align*}
&\sum_{t=1}^T \ell_t(\f_t(\bx_t)) \leq \min_{\h \in \HK} \sum_{t=1}^T \ell_t(\h(\bx_t)) + b \phi\left(\frac{L}{a}\right) \log \left(1+T\right) \\
&\qquad +\normK{\h} T^\frac{1}{4} C \max\left\{ \normK{\h}^\frac{1}{3} T^\frac{1}{12} (2C)^\frac{1}{3}, 2^\frac{1}{4} \left(\sum_{t=1}^T \ell_t(\h(\bx_t)) + B \right)^\frac{1}{4} \right\}.
\end{align*}
Dividing everything by $T$, taking the expectation of the two sides and using Jensen's inequality we have
\begin{align*}
&\E[\RiskLoss(\bbarw_T)] \leq \E\left[\frac{1}{T} \sum_{t=1}^T \RiskLoss(\f_t)\right] \leq \min_{\h \in \HK} \RiskLoss(\h) + \frac{b}{T} \phi\left(\frac{L}{a}\right) \log \left(1+T\right) \\
&\quad +\normK{\h} T^{-\frac{3}{4}} C \max\left\{\normK{\h}^\frac{1}{3} T^\frac{1}{12} \left(2C \right)^\frac{1}{3},  2^\frac{1}{4} \left(T \RiskLoss(\h) + B\right)^\frac{1}{4} \right\}.
\end{align*}

We now need to upper bound the terms in the $\max$. Using Lemma~\ref{lemma:prod_ineq}, we have that, for any $\eta, \gamma>0$
\begin{align}
&2^\frac{1}{4} C \normK{\h} T^{-\frac{3}{4}} (T \RiskLoss(\h)+B)^\frac{1}{4} \nonumber \\
&\quad \leq \frac{1}{2} \left(\eta^\frac{1}{2} \normK{\h}^2 + \eta^{-\frac{1}{2}} C^2 T^{-\frac{3}{2}} 2^\frac{1}{2}(T \RiskLoss(\h)+B)^\frac{1}{2} \right) \nonumber \\
&\quad \leq \frac{1}{2} \left(\eta^\frac{1}{2} \normK{\h}^2 + \frac{1}{2} \gamma^{-1}\eta^{-1} C^4 T^{-2} + \gamma T^{-1} (T \RiskLoss(\h)+B) \right), \label{eq:bound_with_loss}
\end{align}
and
\begin{equation}
2^\frac{1}{3} C^\frac{4}{3} \normK{\h}^\frac{4}{3} T^{-\frac{2}{3}} 
\leq \frac{2}{3} \left(\eta^\frac{3}{2} \normK{\h}^2 + \eta^{-3} C^4 T^{-2}\right). \label{eq:bound_no_loss}
\end{equation}
Consider first \eqref{eq:bound_no_loss}. Observe that from Lemma~\ref{lemma:smooth_loss_to_l2risk} and Lemma~\ref{lemma:d_lambda}, we have
\begin{align}
&\min_{\h \in \HK} \min_{\eta>0} \RiskLoss(\h) + \frac{2}{3} (\eta^\frac{3}{2} \normK{\h}^2 +  C^4 T^{-2} \eta^{-3}) \nonumber \\
&\quad = \min_{\h \in \HK} \min_{\eta>0} \RiskLoss(\h) - \RiskLoss(\flrho) + \RiskLoss(\flrho) + \frac{2}{3} (\eta^\frac{3}{2} \normK{\h}^2 + C^4 T^{-2} \eta^{-3}) \nonumber \\
&\quad \leq \min_{\h \in \HK} \min_{\eta>0} \frac{H}{2}\normL{\h-\flrho}^2 + \RiskLoss(\flrho) + \frac{2}{3} (\eta^\frac{3}{2} \normK{\h}^2 + C^4 T^{-2} \eta^{-3}) \nonumber \\
&\quad =  \min_{\h \in \HK} \min_{\eta>0} \RiskLoss(\flrho) + \frac{2}{3} C^4 T^{-2} \eta^{-3} + \frac{H}{2}\left(\normL{\h-\flrho}^2 + \frac{4}{3H} \eta^\frac{3}{2} \normK{\h}^2 \right) \nonumber \\
&\quad \leq \min_{\eta>0} \RiskLoss(\flrho) + \left(\frac{2}{3}\right)^{2\beta} \left(\frac{H}{2}\right)^{1-2\beta} \eta^{3\beta} \normL{g}^2 + \frac{2}{3} C^4 T^{-2} \eta^{-3} \nonumber \\
&\quad \leq \RiskLoss(\flrho) + 2 \left(\frac{2}{3}\right)^\frac{3\beta}{\beta+1} \normL{g}^\frac{2}{\beta +1} \left(\frac{H}{2}\right)^\frac{1-2\beta}{\beta +1} \left(C^4 T^{-2}\right)^\frac{\beta}{\beta +1}. \label{eq:bound_1}
\end{align}

Consider now \eqref{eq:bound_with_loss}. Reasoning in a similar way we have
\begin{align*}
&\min_{\h \in \HK} \min_{\eta>0} \min_{\gamma>0} \RiskLoss(\h) + \frac{1}{2} \left(\eta^\frac{1}{2} \normK{\h}^2 + \frac{1}{2} \gamma^{-1} \eta^{-1} C^4 T^{-2}+ \gamma T^{-1} (T \RiskLoss(\h) +B)\right) \\
&\quad = \min_{\h \in \HK} \min_{\eta>0} \min_{\gamma>0} \left(1+\frac{1}{2} \gamma \right)\left(\RiskLoss(\h) -\RiskLoss(\flrho) + \frac{1}{2 \left(1+\frac{1}{2} \gamma \right) } \eta^\frac{1}{2} \normK{\h}^2 \right)+ \frac{1}{4} \gamma^{-1} \eta^{-1} C^4 T^{-2}\\
&\qquad + \frac{1}{2} \gamma T^{-1} \left(B  + T \RiskLoss(\flrho) \right)+ \RiskLoss(\flrho)\\
&\quad \leq \min_{\h \in \HK} \min_{\eta>0} \min_{\gamma>0} \frac{H}{2}\left(1+\frac{1}{2} \gamma \right)\left(\normL{\h -\flrho}^2 + \frac{1}{H \left(1+\frac{1}{2}\gamma \right) } \eta^\frac{1}{2} \normK{\h}^2 \right)+ \frac{1}{4} \gamma^{-1} \eta^{-1} C^4 T^{-2}\\
&\qquad + \frac{1}{2} \gamma T^{-1} \left(B  + T \RiskLoss(\flrho) \right) + \RiskLoss(\flrho)\\
&\quad \leq \min_{\eta>0} \min_{\gamma>0} \frac{1}{2} H^{1-2\beta} \left(1+\frac{1}{2} \gamma \right)^{1-2\beta} \eta^\beta \normL{g}^2+ \frac{1}{4} \gamma^{-1} \eta^{-1} C^4 T^{-2}+ \frac{1}{2} \gamma T^{-1} \left(B  + T \RiskLoss(\flrho) \right) \\
&\qquad + \RiskLoss(\flrho)\\
&\quad \leq \min_{\gamma>0} \left(H \left(1+\frac{1}{2} \gamma \right) \right)^{\frac{1-2\beta}{\beta+1}} 
\normL{g}^\frac{2}{1+\beta} \left(4 \gamma\right)^{-\frac{\beta}{\beta+1}} C^\frac{4\beta}{\beta+1} T^{-\frac{2\beta}{\beta+1}}+ \frac{1}{2} \gamma T^{-1} \left(B  + T \RiskLoss(\flrho) \right)\\
&\qquad + \RiskLoss(\flrho)~.
\end{align*}
We now use the elementary inequality $1+x\leq \max(2,2x), \forall x\geq0$, to study separately
\begin{equation}
\label{eq:eq_max_1}
\min_{\gamma>0} \left(2 H \right)^{\frac{1-2\beta}{\beta+1}} 
\normL{g}^\frac{2}{1+\beta} \left(4 \gamma\right)^{-\frac{\beta}{\beta+1}} C^\frac{4\beta}{\beta+1} T^{-\frac{2\beta}{\beta+1}}+ \frac{1}{2} \gamma T^{-1} \left(B  + T \RiskLoss(\flrho) \right),
\end{equation}
and
\begin{equation}
\label{eq:eq_max_2}
\min_{\gamma>0} \left(\gamma H  \right)^{\frac{1-2\beta}{\beta+1}} 
\normL{g}^\frac{2}{1+\beta} \left(4 \gamma\right)^{-\frac{\beta}{\beta+1}} C^\frac{4\beta}{\beta+1} T^{-\frac{2\beta}{\beta+1}}+ \frac{1}{2} \gamma T^{-1} \left(B  + T \RiskLoss(\flrho) \right)~.
\end{equation}
For \eqref{eq:eq_max_1}, from Lemma~\ref{lemma:opt_sum}, we have
\begin{align}
&\min_{\gamma>0} \left(2 H \right)^\frac{1-2\beta}{\beta+1}
\normL{g}^\frac{2}{1+\beta} \left(4 \gamma\right)^{-\frac{\beta}{\beta+1}} C^\frac{4\beta}{\beta+1} T^{-\frac{2\beta}{\beta+1}}+ \frac{1}{2} \gamma T^{-1} \left(B  + T \RiskLoss(\flrho) \right) \nonumber \\
&\quad \leq 2 \left(2 H \right)^{\frac{1-2\beta}{2\beta+1}} 
\normL{g}^\frac{2}{2\beta+1} 4^{-\frac{\beta}{2\beta+1}} C^\frac{4\beta}{2\beta+1} T^{-\frac{2\beta}{2\beta+1}} \left(\frac{1}{2} T^{-1} \left(B  + T \RiskLoss(\flrho) \right)\right)^\frac{\beta}{2\beta+1}. \label{eq:bound_2}
\end{align}
On the other hand, for \eqref{eq:eq_max_2}, for $\beta<\frac{1}{3}$, we have that the minimum over $\gamma$ is 0. For $\beta>\frac{1}{3}$, from Lemma~\ref{lemma:opt_sum}, we have
\begin{align}
&\min_{\gamma>0} \left( \gamma H  \right)^{\frac{1-2\beta}{\beta+1}} 
\normL{g}^\frac{2}{1+\beta} \left(4 \gamma\right)^{-\frac{\beta}{\beta+1}} C^\frac{4\beta}{\beta+1} T^{-\frac{2\beta}{\beta+1}}+ \frac{1}{2} \gamma T^{-1} \left(B  + T \RiskLoss(\flrho) \right) \nonumber\\
&= \min_{\gamma>0} 2^{-\frac{2\beta}{\beta+1}}  H^{\frac{1-2\beta}{\beta+1}} 
\normL{g}^\frac{2}{1+\beta} \gamma^\frac{1-3\beta}{\beta+1} C^\frac{4\beta}{\beta+1} T^{-\frac{2\beta}{\beta+1}}+ \frac{1}{2} \gamma T^{-1} \left(B  + T \RiskLoss(\flrho) \right) \nonumber\\
&\leq 2^\frac{1}{2}  H^\frac{1-2\beta}{4\beta} \normL{g}^\frac{2}{4\beta} C T^{-\frac{1}{2}} \left(\frac{1}{2} T^{-1} \left(B  + T \RiskLoss(\flrho) \right)\right)^\frac{3\beta-1}{4\beta}. \label{eq:bound_3}
\end{align}

Putting together \eqref{eq:bound_1}, \eqref{eq:bound_2}, and \eqref{eq:bound_3}, we have the stated bound.
\end{proof}

\subsection{Proof of Theorem~\ref{theo:self_tuning_l1}}
\begin{proof}
From the proof of Theorem~\ref{theo:self_tuning}, we have that
\[
\begin{split}
\sum_{t=1}^T \ell_t(\f_t(\bx_t)) \leq &\inf_{\h \in \HK} \sum_{t=1}^T \ell_t(\h(\bx_t))+ \normK{\h} \sqrt{ 2 a L T \log \left(\frac{2^{-\frac{1}{2}}\ell(0) T}{b}+\phi\left(a^{-1} L\right) \log \left(1+T\right)+2\right) }\\
&\quad+ b \phi\left(a^{-1} L\right) \log \left(1+T\right).
\end{split}
\]
Dividing everything by $T$, taking the expectation of the two sides and using Jensen's inequality we have
\begin{align*}
\E[\RiskLoss(\bbarw_T)] &\leq \E\left[\frac{1}{T} \sum_{t=1}^T \RiskLoss(\f_t)\right] \\
&\leq \inf_{\h \in \HK} \RiskLoss(\h) + \normK{\h} T^{-\frac{1}{2}} \sqrt{ 2 a L \log \left(\frac{2^{-\frac{1}{2}}\ell(0) T}{b}+\phi\left(a^{-1} L\right) \log \left(1+T\right)+2\right) } \\
&\qquad +\frac{b}{T} \phi\left(a^{-1} L\right) \log \left(1+T\right).
\end{align*}
Denote by $D=\sqrt{ 2 a L \log \left(\frac{2^{-\frac{1}{2}}\ell(0) T}{b}+\phi\left(a^{-1} L\right) \log \left(1+T\right)+2\right) }$.
Using the Lipschitzness of the loss, we have $\RiskLoss(\h)-\RiskLoss(\flrho) \leq L \normLOne{\h - \flrho}$, so
\begin{align*}
&\inf_{\h \in \HK} \RiskLoss(\h) +  D \normK{\h} T^{-\frac{1}{2}} \\
&\quad \leq \inf_{\h \in \HK} \min_{\eta>0} L \left(\normLOne{\h -\flrho} + \frac{\eta}{2 L} \normK{\h}^2 \right) + \RiskLoss(\flrho) +  \frac{1}{2} D^2 T^{-1} \eta^{-1} \\
&\quad \leq \min_{\eta>0} \RiskLoss(\flrho) + C L^{1-\beta} 2^{-\beta} \eta^\beta  +  \frac{1}{2} D^2 T^{-1} \eta^{-1}  \\
&\quad \leq \RiskLoss(\flrho) + C^\frac{1}{1+\beta} (2 L)^{\frac{1-\beta}{1+\beta}} D^\frac{2\beta}{\beta+1} T^{-\frac{\beta}{\beta+1}},
\end{align*}
where in the last inequality we used Lemma~\ref{lemma:opt_sum}.
\end{proof}

\fi
\end{document}